\setlist{leftmargin=10mm}
\DeclareMathOperator*{\argmax}{argmax}
\newcommand{\blue}[1]{\textcolor{blue}{#1}}
\def\pr{\mathrm{Pr}}
\def\E{\mathbb{E}}
\def\P{\mathbb{P}}
\def \E{\mathbb{E}}
\def\R{\mathbb{R}}
\def\cD{\mathcal{D}}
\def\cM{\mathcal{M}}
\def\cN{\mathcal{N}}
\def\cR{\mathcal{R}}
\def\cX{\mathcal{X}}
\def\cY{\mathcal{Y}}
\newtheorem{theorem}{Theorem}
\newtheorem{lemma}[theorem]{Lemma}
\newtheorem{definition}[theorem]{Definition}
\newtheorem{example}[theorem]{Example}
\theoremstyle{definition}
\newtheorem{remark-star}{Remark}
\newtheorem{remark-star-1}{Remark}
\newtheorem{proposition}[theorem]{Proposition}
\newtheorem*{proof-sketch}{Proof Sketch}
\title{Voting-based Approaches for Differentially Private Federated Learning}
\author{Yuqing Zhu$^{1,2}$, Xiang Yu$^{2}$, Yi-Hsuan Tsai$^{2}$, Francesco Pittaluga$^{2}$, Masoud Faraki$^{2}$, \\
	\textbf{Manmohan chandraker$^{2,3}$} and \textbf{Yu-Xiang Wang$^{1}$} \\
	$^{1}$University of California, Santa Barbara\\
	$^{2}$NEC Laboratories America \\
	$^{3}$University of California, San Diego\\
	\texttt{\{yuqingzhu,yuxiangw\}@ucsb.edu} \\ \texttt{\{xiangyu,ytsai,francescopittaluga,mfaraki,manu\}@nec-labs.com} \\
}
\begin{document}
	\maketitle
\begin{abstract}
	Differentially Private Federated Learning (DPFL) is an emerging field with many applications. Gradient averaging based DPFL methods require costly communication rounds and hardly work with large-capacity models, due to the explicit dimension dependence in its added noise. In this work, inspired by knowledge transfer non-federated privacy learning from Papernot et al.(2017; 2018), we design two new DPFL schemes, by voting among the data \emph{labels} returned from each local model, instead of averaging the gradients, which avoids the dimension dependence and significantly reduces the communication cost. Theoretically, by applying secure multi-party computation, we could exponentially amplify the (data-dependent) privacy guarantees when the margin of the voting scores are large. Extensive experiments show that our approaches significantly improve the privacy-utility trade-off over the state-of-the-arts in DPFL.
\end{abstract}

\vspace{-0.3em}
\section{Introduction}
\vspace{-0.3em}
Federated learning (FL) \citep{fedavg,mpc,mohassel2017secureml,smith2017federated}  is an emerging paradigm of distributed machine learning with a wide range of applications \citep{kairouz2019advances}. FL allows distributed agents to collaboratively train a centralized machine learning model without sharing each of their local data, thereby sidestepping the ethical and legal concerns that arise in collecting private user data for the purpose of building machine-learning based product and services.

The workflow of FL is often enhanced by the secure multi-party computation \citep{mpc} (MPC), so as to handle various threat models in the communication protocols, which provably ensures that the agents could receive the output of the computation (e.g.,  the sum of the gradients) but nothing in between (e.g., other agents' gradient). 

However, MPC alone does not protect the agents or their users from inference attacks that use only the output, or combine the output with auxiliary information. Extensive studies demonstrate that these attacks could lead to a blatant reconstruction of the proprietary datasets \citep{dinur2003revealing}, high-confidence identification of individuals (a legal liability for the participating agents)~\citep{shokri2017membership}, or even completion of social security numbers \citep{carlini2019secret}. Motivated by these challenges, there had been a number of recent efforts \citep{dpsgd2019, dp_fl, dp_language} in developing federated learning methods with differential privacy (DP) \citep{dwork2006calibrating}, a well-established definition of privacy that provably prevents such attacks.

Existing methods in differentially private federated learning (DPFL), e.g., DP-FedAvg \citep{dp_fl, dp_language}  and the recent state-of-the-art DP-FedSGD \citep{dpsgd2019}, are predominantly noisy gradient based methods, which build upon 
the NoisySGD method, a classical algorithm in (non-federated) DP learning \citep{song2013stochastic,bassily2014private,abadi2016deep}. They work by iteratively aggregating (multi-)gradient updates from individual agents using a differentially private mechanism.
A notable limitation for this approach is that they require clipping the $\ell_2$ magnitude of gradients to $S$ and adding noise proportional to $S$ with \emph{every coordinate} of the high dimensional parameters from the shared global model. The clipping and perturbation steps introduce either large bias (when $S$ is small) or large variance (when $S$ is large), which interferes the SGD convergence and makes it hard to scale up to large-capacity models. In Sec. \ref{sec:challenge}, we concretely demonstrate these limitations with examples and theory. Particularly, we show that the FedAvg may fail to decrease the loss function using gradient clipping, and DP-FedAvg requires many outer-loop iterations (i.e., many rounds of communication to synchronize model parameters) to converge under differential privacy. 

In this paper, we consider a fundamentally different DP learning setting known as the \emph{Knowledge Transfer} model \citep{papernot2017} (a.k.a. the \emph{Model-Agnostic Private learning} model \citep{bassily2018model}). This model requires an \emph{unlabeled} dataset to be available \emph{in the clear}, which makes this setting slightly more restrictive. However, when such a public dataset is indeed available (it often is in federated learning with domain adaptation, see, e.g., ~\citet{peterson2019private,agnostic_fl,peng2019}),  it could substantially improve the privacy-utility tradeoff in DP learning  \citep{papernot2017, papernot2018, private_kNN}.  

The modest goal of this paper is to develop DPFL algorithms under the \emph{knowledge transfer} model, for which we propose two algorithms (\textit{AE-DPFL} and \textit{kNN-DPFL} ), that further develop from the \textit{non-distributed}  Private-Aggregation-of-Teacher-Ensembles (PATE) \citep{papernot2018} and Private-kNN \citep{private_kNN} to the FL setting. We discover that the distinctive characteristics of these algorithms make them \emph{natural} and \emph{highly desirable} for DPFL tasks. Specifically, the private aggregation is now essentially privately releasing ``ballot counts'' in the (one-hot) label space, instead of the parameter (gradient) space. This naturally avoids the aforementioned issues associated with high dimensionality and gradient clipping. Instead of transmitting the gradient update, transmitting the vote of the ``ballot counts'' tremendously reduce the communication cost. Moreover, many iterations of the model update using noise addition with SGD, leads to poor privacy guarantee, where our methods exactly avoid this and use voting on labels, thus significantly outperform the state-of-the-art DPFL methods.

Our contributions are summarized in four folds.
\begin{enumerate}[noitemsep, nolistsep]
	\item{We construct  examples to demonstrate that DP-FedAvg (a) may fail due to gradient clipping and (b) requires many rounds of communications; while our approach naturally avoids both limitations.}
	\item{We identify the two typical regimes FL operates in and highlight that they demand two distinct granularity levels of DP protections: \emph{agent-level} or \emph{instance (of-each-agent)-level}. }
	\item{We design distributed algorithm that provides provable DP guarantees on both \emph{agent-level} and \emph{instance-level} against strong adversaries.  By a new MPC technique \citep{dery2019fear}, we show that our proposed private voting mechanism enjoys an \emph{exponentially stronger} (data-dependent) privacy guarantee when the ``winner'' wins by a large margin.}
	\item{Extensive evaluation demonstrates that our method systematically improves the privacy-utility trade-off over DP-FedAvg and DP-FedSGD, and that our methods are more robust towards distribution-shifts across agents. }
\end{enumerate}

\noindent \textbf{A remark of our novelty.}
Though \textit{AE-DPFL} and \textit{kNN-DPFL} are algorithmically similar to the original \textit{PATE}~\citep{papernot2018} and \textit{Private-KNN}~\citep{private_kNN}, they are not the same and we facilitate them to a new problem --- \emph{federated learning}. The facilitation itself is nontrivial and requires substantial technical innovations. We highlight three challenges below.

To begin with, several key DP techniques that contribute to the success of PATE and Private-kNN in the standard settings are no longer applicable (e.g., privacy amplification by sampling and noisy screening). This is partly because in standard private learning, the attacker only sees the final models; but in FL, the attacker can eavesdrop in all network traffic and could be a subset of the agents themselves.  

Moreover, PATE and Private-kNN only provide instance-level DP. We show \textit{AE-DPFL} and \textit{kNN-DPFL} also satisfy the stronger agent-level DP.  \textit{AE-DPFL}'s agent-level DP parameter is, surprisingly, a factor of $2$ better than its instance-level DP parameter. \textit{kNN-DPFL} in addition enjoys a factor of $k$ amplification for the instance-level DP.

Finally, a key challenge of FL is the data heterogeneity of individual agents, where PATE randomly splits the dataset so each teacher is identically distributed. The heterogeneity 
violates the key I.I.D assumption which PATE \citep{bassily2018model,liu2020revisiting} relies on.
With our novel design, we are the first to report strong empirical evidence that the PATE-like DP algorithms can still remain highly effective in the non-I.I.D setting. 

\vspace{-0.5em}
\section{Preliminary}
\vspace{-0.5em}
\begin{figure*}[t]
	\centering	
	\includegraphics[width=0.95\linewidth]{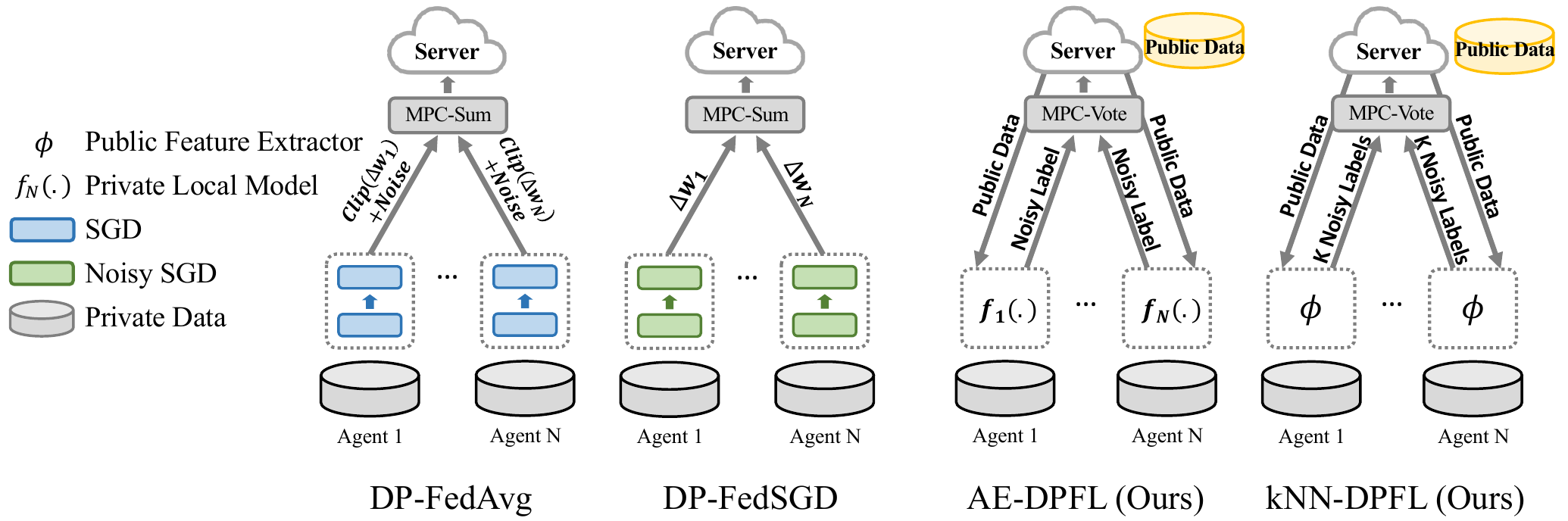}
	\vspace{-3mm}
	\caption{The structural difference of our methods to DP-FedAvg and DP-FedSGD. DP-FedAvg and \textit{AE-DPFL} are for agent-level DP. DP-FedSGD and \textit{kNN-DPFL} are for instance-level DP.}
	\label{fig:illus}
	\vspace{-5mm}
\end{figure*}
In this section, we start by introducing the typical notations of federated learning and differential privacy. Then, by introducing the two different level DP definitions, two randomized gradient-based baselines, DP-FedAvg and DP-FedSGD, are introduced as the DPFL background.

\vspace{-0.5em}
\subsection{Federated Learning}
\vspace{-0.5em}
\noindent\textbf{Notation} We consider $N$ agents, each agent $i$ has $n_i$ data kept local and private from a party-specific domain distribution $\cD_i \in \cX \times \cY$, where $\cX$ denotes the feature space and $\cY =\{0, ..., C-1\}$ denotes the label.

\noindent\textbf{Problem setting} The goal is to train a privacy-preserving global model that performs well on the server distribution $\cD_G$ without centralizing local agent data. We assume access to an \emph{unlabeled} dataset containing I.I.D samples from the server distribution $\cD_G$. This is a standard assumption from ``agnostic federated learning'' ~\citep{agnostic_fl, peng2019,peterson2019private} literature, and more flexible than fixing $\cD_G$ to be the uniform user distribution over the union of all agents. The choice of $\cD_G$ is application-specific and it represents the various considerations of the learning objective such as accuracy, fairness and the need for personalization. The setting is closely related to the multi-source domain adaptation problem~\cite{zhang2015multi} but is more challenging due to the restricted access of the source (local) data.





\noindent\textbf{FL Baseline:} FedAvg~\citep{fedavg} is a vanilla federated learning algorithm as a non-DP baseline. A fraction of agents is sampled at each communication round with a probability $q$. Each selected agent downloads the shared global model and fine-tune with local data for $E$ iterations using stochastic gradient descent (SGD). We denote this local update process as an \textit{inner loop}. Then, only the gradients are sent to the server, and averaged across all the selected agents to improve the global model. The global model is learned after $T$ communication rounds. Such communication round is denoted as one \textit{outer loop}. 
\vspace{-0.5em}
\subsection{Differential Privacy for Federated Learning}

Differential privacy~\citep{dwork2006calibrating} is a quantifiable definition of privacy that provides provable guarantees against identification of individuals in a private dataset. 
\begin{definition}
	\textbf{Differential Privacy:} A randomized mechanism $\cM: \cD \to \cR$ with a domain $\cD$ and range $\cR$ satisfies $(\epsilon, \delta)$-differential privacy, if for any two \emph{adjacent} datasets $D, D' \in \cD$ and for any subset of outputs $O  \subseteq \cR$, it holds that $\pr[\cM(D) \in O]\leq e^\epsilon \pr[\cM(D')\in O ] + \delta$.
\end{definition}

The definition indicates that one could not distinguish between $D$ and $D'$ therefore protecting the ``delta'' between $D,D'$. Depending on how \emph{adjacency} is defined, this ``delta'' comes with different semantic meaning. We consider two levels of granularity:
\begin{definition}
	\textbf{Agent-level DP:} When $D'$ is constructed by adding or removing an agent from $D$ (with all data points from that agent).
\end{definition}
\begin{definition}
	\textbf{Instance-level DP:} When $D'$ is constructed by adding or removing one data point from any of the agents.
\end{definition}
Among the applications of DPFL, the above two definitions have their own favorable situations. For example, when a smart phone app jointly learns from its users' text messages, it is more appropriate to protect each user as a unit, which is agent-level DP. In another situation, when five hospitals would like to collaborate on a patient study via federated learning, obfuscating the entire dataset from one hospital is meaningless, where the instance-level DP is more suitable to protect the individual patient from being identified.

\noindent\textbf{DPFL Baselines:} DP-FedAvg~\citep{dp_fl, dp_language} (Figure~\ref{fig:illus} and Algorithm~\ref{alg:dp_fedavg}), a representative DPFL algorithm, when compared to FedAvg, it enforces clipping of per-agent model gradient to a threshold $S$ (see Step $3$ in Algorithm~\ref{alg:dp_fedavg} NoisyUpdate) and adds noise to the scaled gradient before it is averaged at the server, which ensures agent-level DP. 
DP-FedSGD \citep{dpsgd2019,peterson2019private}, is one of the state-of-the-arts that focus on instance-level DP. It performs NoisySGD~\citep{abadi2016deep} for a fixed number of iterations at each agent. The gradient updates are averaged on each communication round at the server, as shown in Figure~\ref{fig:illus}. 



\noindent\textbf{Multi-Party Computation (MPC):} 
MPC is a cryptographic technique that securely aggregates local updates before the server receives it. While MPC does not have a differential privacy guarantee, it can be combined with DP to \emph{amplify} the privacy guarantee \citep{mpcapplication,cpsgd,dpsgd2019}.
Specifically, if each party adds a small independent noise to the part they contribute, MPC ensures that an attacker can only observe the total, even if he taps the network messages and hacks into the server. In this paper, we consider a new MPC technique due to \citep{dery2019fear} that allows only the voted winner to be released while keeping the voting scores completely hidden. This allows us to further amplify the DP guarantees. In our experiment, we assume the aggregation is conducted by MPC for all privacy-preserving algorithms that we consider (see Figure~\ref{fig:illus}).


\begin{algorithm}[t]
	\caption{DP-FedAvg  }
	\label{alg:dp_fedavg}
	{\bf Input:}
	Agent selection probability $q$, noise scale $\sigma$, clipping threshold $S$.
	\begin{algorithmic}[1]
		\STATE{Initialize global model $\theta^0$}
		\STATE \textbf{for}  $t=0, 1,2,..., T$ \textbf{do}
		\STATE  \ \ \ \ \ \ $m_t \gets$ Sample agents with  $q$
		\STATE \ \ \ \ \ \ \textbf{for} each agent $i$ in parallel \textbf{do}
		\STATE \ \ \ \  \ \  \ \ \ \ \ \ $\triangle_i^{t} =$ NoisyUpdate($i, \theta^t, t, \sigma, m_t$)
		\STATE \ \ \ \ \ \ \ $\theta^{t+1} = \theta^t + \frac{1}{m_t}\sum_{i=0}^{m_t} {\triangle_i}^t $
	\end{algorithmic}
	
	NoisyUpdate $(i, \theta^0, t, \sigma, m_t)$
	\begin{algorithmic}[1]
		\STATE $\theta \gets \theta^0$
		\STATE $\theta \gets E$ iterations SGD from $\theta^0$
		\STATE  ${\triangle_i}^t= (\theta - \theta^0)/\max(1, \frac{|| \theta - \theta^0||_2}{S})$
		\STATE return update ${\triangle_i}^t +\cN(0, \sigma^2S^2/{m_t})$				
	\end{algorithmic}
	
\end{algorithm}

\section{Challenges in Gradient-based FL}\label{sec:challenge}
\vspace{-0.5em}
In this section, before introducing our approaches, we motivate by highlighting the main challenges in the conventional DPFL methods in terms of gradient estimation, convergence, and data heterogeneity. For other challenges, we refer the readers to a survey~\citep{kairouz2019advances}. 

\noindent\textbf{Challenge 1: Biased Gradient Estimation.}  
Recent works~\citep{li2018federated} have shown that the FedAvg may not converge well under data heterogeneity. We provide a simple example to show that the clipping step of DP-FedAvg may exacerbate the issue. 

\begin{example}
	Let $N=2$, each agent $i$'s local update is $\triangle_i$ ($E$ iterations of SGD). We enforce clipping of per-agent update $\triangle_i$ by performing $\triangle_i / \max(1, \frac{||\triangle_i||_2}{S})$, where $S$ is the clipping threshold.  Consider the special case when $||\triangle_1||_2 = S +\alpha$ and $||\triangle_2||_2 \leq S$. Then the global update will be $\frac{1}{2}(\frac{S \triangle_1}{||\triangle_1||_2}+\triangle_2)$, which is biased.
\end{example}
Comparing to the FedAvg updates $\frac{1}{2}(\triangle_1 + \triangle_2)$, the biased update could be $0$ (not moving) or pointing towards the opposite direction.
Such a simple example can be embedded in more realistic problems, causing substantial bias that leads to non-convergence.

\noindent\textbf{Challenge 2: Slow Convergence.}
Following works on FL convergence analysis~\citep{li2019convergence,wang2019adaptive}, we derive the convergence analysis on DP-FedAvg and demonstrate that using many outer-loop iterations ($T$) could result in similar convergence issue under differential privacy.



The appeal of FedAvg is to set $E$ to be larger so that each agent performs $E$ iterations to update its own parameters before synchronizing the parameters to the global model, hence reducing the number of rounds in communication. We show that the effect of increasing $E$ is essentially increasing the learning rate for a large family of optimization problems with piece-wise linear objective functions, which does not change the convergence rate. The detailed analysis is in appendix convergence section due to space limit. Specifically, it is known that for the family of $G-$Lipschitz functions supported on a $B$-bounded domain, any Krylov-space method \footnote{One that outputs a solution in the subspace spanned by a sequence of sub-gradients.} has convergence rate that is lower bounded by $\Omega(BG/\sqrt{T})$ \citep[Section 3.2.1]{nesterov2003introductory}. This indicates that the variant of FedAvg requires $\Omega(1/\alpha^2)$ rounds of outer loop (i.e., communication), in order to converge to an $\alpha$ stationary point, i.e., increasing $E$ does \emph{not} help, even if no noise is added.

It also indicates that DP-FedAvg is essentially the same as \emph{stochastic} sub-gradient method in almost all locations of a piece-wise linear objective function with gradient noise being $\cN(0, \sigma^2/{N} I_d)$. The additional noise in DP-FedAvg imposes more challenges to the convergence. If we plan to run $T$ rounds and achieve $(\epsilon,\delta)$-DP, we need to choose $\sigma = \frac{\eta EG\sqrt{2T\log(1.25/\delta)}}{N\epsilon}$ \citep[see, e.g., ][Theorem~1]{dp_language}, 
which results in a convergence rate upper bound of 
\vspace{-0.3em}
{\small
	$$
	\frac{GB (\sqrt{1 +  \frac{2Td\log(1.25/\delta)}{N^2\epsilon^2}} )}{\sqrt{T}}  = O\left( \frac{GB}{\sqrt{T}} + \frac{\sqrt{d\log(1.25/\delta)}}{N\epsilon}\right),
	$$
}
for an optimal choice of the learning rate $E\eta$.

The above bound is tight for stochastic sub-gradient methods, and in fact also information-theoretically optimal. The $GB/\sqrt{T}$ part of upper bound matches the information-theoretical lower bound for all methods that have access to $T$-calls of stochastic sub-gradient oracle \citep[Theorem 1]{agarwal2009information}. While the second matches the information-theoretical lower bound for all $(\epsilon,\delta)$-differentially private methods on the agent level \citep[Theorem 5.3]{bassily2014private}. That is, the first term indicates that there must be \emph{many rounds of communications}, while the second term says that the \emph{dependence in ambient dimension $d$} is unavoidable for DP-FedAvg. Clearly, our method also has such dependence \emph{in the worst case}. But it is easier for our approach to adapt to the structure that exists in the data (i.e., high consensus among voting), as we will illustrate later. In contrast, it has larger impact on DP-FedAvg, since it needs to explicitly add noise with variance $\Omega(d)$.

Another observation is when $N$ is small, no DP method with reasonable $\epsilon,\delta$ parameters is able to achieve high accuracy for agent-level DP. This partially motivates us to consider the other regime that deals with instance-level DP.

\textbf{Challenge 3: Data Heterogeneity.} 
Federated learning with domain adaptation has been studied in~\citet{peng2019}, where they propose a dynamic attention model to adjust the contribution from each source (agent) collaboratively. However, most multi-source domain adaptation algorithms, including this approach, require sharing local feature vectors to the target domain, which is not compatible with the DP setting. Enhancing \textit{DP-FedAvg} with the effective domain adaptation technique remains an open problem.

\section{Our Approach}
\vspace{-0.5em}
To alleviate above challenges, we propose two voting-base algorithms, termed aggregation ensemble DPFL ``\textit{AE-DPFL}'' and k Nearest Neighbor DPFL ``\textit{kNN-DPFL}''.  Each algorithm first privately labels a subset of data from the server and then trains a global model using pseudo-labeled data.
\vspace{-1em}
\subsection{Aggregation Ensemble - DPFL }
\vspace{-0.5em}


In \textit{AE-DPFL} (Algorithm~\ref{alg:pate}), each agent $i$ trains a local agent model $f_i$ using its own private local data. The local model is never revealed to the server but only used to make predictions for unlabeled data (queries). For each query $x_t$, every agent $i$ adds Gaussian Noise to the prediction (i.e., $C$-dimensional histogram where each bin is zero except the $f_i(x_t)$-th bin is $1$). The ``pseudo label'' is achieved with the majority vote returned by aggregating the noisy predictions from the local agents.
%


For instance-level DP, the spirit of our method shares with PATE, in the aspect of by adding or removing one instance, it can \textit{change} at most one agent's prediction. The same argument also naturally applies to \emph{adding or removing one agent}. In fact we gain a factor of $2$ in the stronger agent-level DP due to a smaller sensitivity in our approach (see proof in appendix section C). 
Another important difference is that in the original PATE, the teacher models are are trained on I.I.D data (random splits of the whole private data), while in our case, the agents are naturally present with different distributions. We propose to optionally use domain adaptation techniques to mitigate these differences when training the agents.


\vspace{-1em}
\subsection{kNN - DPFL}
\vspace{-0.5em}



From Definition 2 and 3, preserving agent-level DP is generally more difficult than the instance-level DP. We find that for \textit{AE-DPFL}, the privacy guarantee for instance-level DP is weaker than its agent-level DP guarantee (see Theorem~\ref{thm:privacy}). To amplify the instance-level DP, we now introduce our \textit{kNN-DPFL}.

As in Algorithm~\ref{alg:knn}, each agent maintains a data-independent feature extractor $\phi$, i.e., an ImageNet~\citep{deng2009imagenet} pre-trained network without the classifier layer. For each unlabeled query $x_t$, agent $i$ first finds the $k_i$ nearest neighbors to $x_t$ from its local data by measuring the Euclidean distance in the feature space $\cR^{d_\phi}$. Then, $f_i(x_t)$ outputs the frequency vector of the votes from the nearest neighbors, which equals to $\frac{1}{k}(\sum_{j=1}^k y_j)$, where $y_j \in \cR^C$ indicates the one-hot vector of the ground-truth label. Subsequently, $\tilde{f}_i(x_t)$ from all agents are privately aggregated with the argmax of the noisy voting scores returned to the server.
%

Different from Private-kNN, we apply kNN
on each agent's local data instead of the entire private dataset. This distinction allows us to receive up to $kN$ neighbors while bounding the contribution of individual agents by $k$. Comparing to \textit{AE-DPFL}, this approach enjoys a stronger instance-level DP guarantee since the sensitivity from adding or removing one instance is a factor of $k/2$ times smaller than that of the agent-level.



\begin{figure}[t]
	\centering
	\begin{minipage}{0.47\textwidth}
		\begin{algorithm}[H]
			\caption{\textit{AE-DPFL}  with MPC-Vote}\label{alg:pate}
			\begin{algorithmic}[1]
				\INPUT{Noise level $\sigma$,  unlabeled public data $\cD_G$, integer $Q$.}
				\STATE{ Train local model $f_i$ using $\cD_i$}
				\FOR{$t=0, 1,..., Q$, pick $x_t\in \cD_G$ }
				\FOR{each agent  $i$ in $1, ..., N$ (in parallel)}
				\STATE $\tilde{f_i}(x_t) = f_i(x_t) + \cN(0, \frac{\sigma^2}{N} I_C)$. 
				\ENDFOR
				\STATE  $\tilde{y}_t = \argmax_{y \in \{1,...,C\}} [\sum_{i=1}^{N} \tilde{f}_i(x_t)]_y$ via MPC.
				\ENDFOR
				\OUTPUT  A global model $\theta$ trained using ${(x_t, \tilde{y}_t)}_{t=1}^{Q}$
			\end{algorithmic}
		\end{algorithm}
	\end{minipage}
	\quad
	\begin{minipage}{0.47\textwidth}
		\begin{algorithm}[H]
			\caption{\textit{kNN-DPFL} with MPC-Vote}
			\label{alg:knn}
			\begin{algorithmic}[1]
				\INPUT{Noise level $\sigma$, unlabeled public data $\cD_G$, integer $Q$, feature map $\phi$.}
				\FOR{$t=0, 1,..., Q$, pick $x_t\in \cD_G$ }
				\FOR{each agent  $i$ in $1, ..., N$ (in parallel)}
				\STATE{ Apply  $\phi$ on $\cD_i$ and $x_t$}
				\STATE{ $y_1, ..., y_k \gets$ labels of the k nearest neighbor. }
				\STATE{ $\tilde{f}_i(x_t) = \frac{1}{k}(\sum_{j=1}^k y_j)+ \cN(0,\frac{\sigma^2}{N} I_C)$}
				\ENDFOR
				\STATE $\tilde{y}_t = \argmax_{y \in \{1,...,C\}} [\sum_{i=1}^{N} \tilde{f}_i(x_t)]_y$ via MPC.
				\ENDFOR
				\OUTPUT A global model $\theta$ trained using ${(x_t, \tilde{y}_t)}_{t=1}^{Q}$
			\end{algorithmic}
		\end{algorithm}
	\end{minipage}
	\vspace{-1em}
\end{figure}
\vspace{-1em}
\subsection{Privacy Analysis}
\vspace{-0.5em}
Our privacy analysis is based on Renyi differential privacy (RDP)~\citep{mironov2017renyi}. RDP inherits and generalizes the information-theoretical properties of DP, and has been used for privacy analysis in DP-FedAvg and DP-FedSGD. Notably RDP composes naturally and implies the standard $(\epsilon,\delta)$-DP for all $\delta > 0$.  
We defer the background about RDP, its connection to DP and all proofs of our technical results to the appendix RDP section.

\begin{theorem}[Privacy guarantee]\label{thm:privacy}
	Let \textit{AE-DPFL} and \textit{kNN-DPFL} answer $Q$ queries with noise scale $\sigma$. For agent-level protection, both algorithms guarantee $(\alpha, \frac{Q\alpha}{2\sigma^2})$-RDP for all $\alpha\geq 1$. 
	For instance-level protection, \textit{AE-DPFL} and \textit{kNN-DPFL} obey $ (\alpha, \frac{Q\alpha}{\sigma^2})$ and
	$ (\alpha, \frac{Q\alpha}{k\sigma^2})$-RDP respectively.
\end{theorem}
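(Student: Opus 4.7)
The plan is to treat each of the $Q$ queries as an independent Gaussian mechanism applied to the aggregated vote that MPC releases, invoke the standard Gaussian-mechanism RDP bound $(\alpha,\Delta^2\alpha/(2\sigma^2))$, and compose additively across queries. Because MPC exposes only $\sum_i \tilde f_i(x_t)$, and the $N$ agent-side noises $\cN(0,\sigma^2/N\cdot I_C)$ add up to $\cN(0,\sigma^2 I_C)$, every query is equivalent to a single Gaussian mechanism with effective noise scale $\sigma$ applied to the population statistic $q(D) = \sum_i f_i(x_t)$ (or its kNN analogue $\sum_i \tfrac{1}{k}\sum_{j=1}^k y_{i,j}$). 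The downstream argmax, and the training of the final model on the pseudo-labeled set $\{(x_t,\tilde y_t)\}_{t=1}^Q$, are pure post-processing and do not inflate RDP; composition across queries simply adds the per-query bounds.

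Everything then hinges on the $\ell_2$-sensitivity $\Delta$ in each of the four regimes. For agent-level adjacency, changing one agent shifts $q(D)$ by a single vector of $\ell_2$-norm at most $1$: a one-hot for \textit{AE-DPFL}, and a convex combination of one-hots (the $\tfrac{1}{k}\sum_j y_j$ histogram, whose entries are non-negative and sum to $1$) for \textit{kNN-DPFL}. Both give $\Delta\le 1$ and hence the stated $Q\alpha/(2\sigma^2)$ after composition. For instance-level \textit{AE-DPFL}, a single-point change can at worst flip an agent's prediction $e_a\to e_b$, so $\Delta\le\sqrt{2}$ and the per-query RDP is $\alpha/\sigma^2$, yielding $Q\alpha/\sigma^2$ --- a factor of $2$ worse than the agent-level bound, exactly as remarked in the introduction. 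For instance-level \textit{kNN-DPFL}, the central combinatorial observation is that adding or removing one local point of agent $i$ evicts at most one element of its $k$-nearest list (equivalently, promotes at most one new neighbor), so its histogram contribution shifts by $\tfrac{1}{k}(y_{\text{new}}-y_{\text{old}})$; the resulting $1/k$ shrinkage in $\Delta$ is what produces the claimed amplification of the RDP bound relative to the \textit{AE-DPFL} counterpart.

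I expect the main obstacle to be this last sensitivity analysis, together with the correct invocation of the \citet{dery2019fear} MPC-vote layer. One must argue rigorously that the $k$-nearest list changes by at most one element --- with attention to tie-breaking when the evicted and the promoted neighbor are equidistant --- and must verify that the MPC primitive truly releases \emph{only} the argmax, so that the unrevealed noisy scores enter the privacy accounting only through the post-processing inequality and cannot be exploited by an adversary who eavesdrops on network traffic or colludes with a subset of agents. Once these facts are in hand, the theorem follows by plugging each $\Delta$ into the Gaussian-mechanism RDP bound and summing across the $Q$ queries.
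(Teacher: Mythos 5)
Your proposal is correct and follows essentially the same route as the paper's proof: MPC aggregation makes each query a single Gaussian mechanism with effective noise $\cN(0,\sigma^2 I_C)$ on the summed votes, the argmax and downstream training are post-processing, per-regime $\ell_2$-sensitivities ($1$ for agent-level in both algorithms, $\sqrt{2}$ for instance-level \textit{AE-DPFL}) are plugged into the Gaussian RDP bound $\Delta^2\alpha/(2\sigma^2)$, and RDP composes additively over the $Q$ queries. The one substantive difference is the instance-level \textit{kNN-DPFL} sensitivity: your argument that at most one neighbor is swapped gives $\Delta \le \sqrt{2}/k$ and hence $Q\alpha/(k^2\sigma^2)$, which is \emph{stronger} than the stated $Q\alpha/(k\sigma^2)$; the paper's own proof uses the looser bound $\Delta \le \sqrt{2/k}$, which is exactly what produces the $1/(k\sigma^2)$ rate in the theorem, so your analysis still implies the claim (and would in fact sharpen it, modulo the tie-breaking caveat you already flag).
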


Theorem 5 suggests that both algorithms achieve agent-level and instance-level differential privacy.  With the same noise injection to the agent's output, \textit{kNN-DPFL} enjoys a \emph{stronger} instance-level DP (by a factor of $k/2$) compared to its agent-level guarantee, while \textit{AE-DPFL}'s instance-level DP is \emph{weaker} by a factor of $2$. Since \textit{AE-DPFL} allows an easy-extension with the domain adaptation technique, we choose to use \textit{AE-DPFL} for the agent-level DP and apply \textit{kNN-DPFL} for the instance-level DP in the experiments.


\textbf{Improved accuracy and privacy with  large margin:}
Let $f_1,...,f_N: \cX\rightarrow \triangle^{C-1}$ where $\triangle^{C-1}$ denotes the probability simplex --- the soft-label space.  Note that both algorithms we propose can be viewed as voting of these local agents, which output a probability distribution in $\triangle^{C-1}$.
First, let us define the margin parameter $\gamma(x)$ that measures the difference between the largest and second largest coordinate of $\frac{1}{N}\sum_{i=1}^N f_i(x)$. 

\begin{lemma}\label{lem: majorite_vote} 
	Conditioning on the local agents, for each server data point $x$,  the noise added to each coordinate of $\frac{1}{N}\sum_{i=1}^N f_i(x)$ is drawn from $\cN(0, \sigma^2/N^2)$, 
	then with probability $\geq 1-C\exp\{-N^2\gamma(x)^2/8\sigma^2\}$, the  privately released label matches the majority vote without noise.
\end{lemma}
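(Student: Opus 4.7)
The plan is to reduce the claim to a straightforward Gaussian tail estimate applied coordinatewise to the injected noise. Set $v(x) := \tfrac{1}{N}\sum_{i=1}^{N} f_i(x) \in \R^C$ and $y^\star := \argmax_{y} v(x)_y$, so that, by the definition of the margin, $v(x)_{y^\star} - v(x)_y \geq \gamma(x)$ for every $y\neq y^\star$. The privately released label is $\tilde{y} := \argmax_{y}\bigl(v(x)_y + z_y\bigr)$, where $z \sim \cN(0,(\sigma^2/N^2)I_C)$ is the aggregate perturbation on the averaged vote vector (equivalent to the sum of the per-agent $\cN(0,\sigma^2/N)$ perturbations divided by $N$; since $\argmax$ is scale-invariant, it makes no difference whether one works with the sum or the average).

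The main deterministic step is to isolate a clean sufficient condition for $\tilde{y} = y^\star$, namely
\begin{equation*}
z_{y^\star} > -\gamma(x)/2 \ \text{and}\ z_y < \gamma(x)/2 \ \text{for every } y\neq y^\star.
\end{equation*}
Indeed, on this event, $v(x)_{y^\star} + z_{y^\star} > v(x)_{y^\star} - \gamma(x)/2 \geq v(x)_y + \gamma(x)/2 > v(x)_y + z_y$ for every $y\neq y^\star$, so the noise cannot overturn the winner. I would emphasize that this is precisely $C$ one-sided Gaussian events (one for the winner and $C-1$ for the losers), which is what will eventually match the factor of $C$ in the claimed bound.

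It then remains to apply the standard one-sided Gaussian tail inequality $\pr[Z \geq t] \leq \exp\bigl(-t^2/(2s^2)\bigr)$ for $Z \sim \cN(0, s^2)$ to each of these $C$ events (with $s^2 = \sigma^2/N^2$ and $t = \gamma(x)/2$), giving a per-event bound of $\exp\bigl(-N^2\gamma(x)^2/(8\sigma^2)\bigr)$. A union bound over the $C$ one-sided events then yields $\pr[\tilde{y}\neq y^\star] \leq C\exp\bigl(-N^2\gamma(x)^2/(8\sigma^2)\bigr)$, which is the claim.

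The argument is essentially routine and I do not anticipate a real obstacle. The only small bookkeeping subtlety is that one must use \emph{one-sided} rather than two-sided Gaussian tail bounds here so that the leading constant in front of $C$ matches the lemma statement exactly; alternatively, one could look directly at the pairwise differences $z_y - z_{y^\star} \sim \cN(0,2\sigma^2/N^2)$ and union-bound over the $C-1$ competitors, which produces a slightly different (in fact tighter) exponent and would also suffice.
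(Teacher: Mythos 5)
Your proof is correct and follows essentially the same route as the paper's: the same sufficient event (winner's noise above $-\gamma(x)/2$, each loser's below $\gamma(x)/2$), the same one-sided Gaussian tail bound per coordinate, and the same union bound over $C$ coordinates. Your aside about pairwise differences giving a slightly tighter exponent is a valid observation but not needed to match the stated bound.
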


The proof (in Appendix C) is a straightforward application of Gaussian tail bounds and a union bound over $C$ coordinates. This lemma implies that for all public data points $x$ such that $\gamma(x) \geq \frac{2\sqrt{2\log(C/\delta)}}{N}$, the output label matches noiseless majority votes with probability at least $1-\delta$.

Next we show that for those data points $x$ such that $\gamma(x)$ is large, the privacy loss for releasing $\argmax_j [\frac{1}{N}\sum_{i=1}^N f_i(x)]_j$ is exponentially smaller. 

\begin{theorem}\label{thm: data_dependent}
	For each public data point $x$, the mechanism that releases $\argmax_j [\frac{1}{N}\sum_{i=1}^N f_i(x) +  \cN(0, (\sigma^2/N^2)  I_C)]_j$ obeys $(\alpha,\epsilon)$-data-dependent-RDP, where
	$$
\epsilon \leq 2Ce^{-\frac{N^2\gamma(x)^2}{8\sigma^2}} +  \frac{1}{\alpha-1}\log\left(1 +   e^{\frac{(2\alpha-1)\alpha s}{2\sigma^2} - \frac{N^2\gamma(x)^2}{8\sigma^2} + \log C/2}\right),
$$
where $s=1$ for AE-DPFL with the agent-level DP, and $s=2/k$ for KNN-DPFL with the instance-level DP.
\end{theorem}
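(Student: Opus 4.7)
The plan is a data-dependent Renyi divergence amplification in the spirit of the original PATE analysis: combine the worst-case Gaussian RDP bound (inherited by the $\argmax$ via post-processing) with the margin-driven concentration from Lemma~\ref{lem: majorite_vote}. I would fix the query $x$, let $j^{\star} = \argmax_j [\tfrac{1}{N}\sum_i f_i^D(x)]_j$ denote the noiseless plurality on $D$, and write $P,Q$ for the laws of the released label on the neighboring datasets $D,D'$. As a baseline, the underlying Gaussian mechanism releasing $\tfrac{1}{N}\sum_i f_i + \cN(0,(\sigma^2/N^2) I_C)$ has squared $\ell_2$ sensitivity captured by $s$ (so $s=1$ for agent-level AE-DPFL from $\|f_i\|_2\le 1$, and $s=2/k$ for instance-level kNN-DPFL from swapping one instance inside a top-$k$ vote), which by post-processing gives $D_\lambda(P\|Q) \le \lambda s/(2\sigma^2)$ for every $\lambda > 1$. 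Lemma~\ref{lem: majorite_vote} in turn yields $p_{j^{\star}} \ge 1-Cq$ and, in the regime where the margin dominates the sensitivity, also $q_{j^{\star}}\ge 1-Cq$, with $q := \exp(-N^2\gamma(x)^2/(8\sigma^2))$.

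The heart of the argument is to expand
\begin{equation*}
e^{(\alpha-1)D_\alpha(P\|Q)} \;=\; p_{j^{\star}}^\alpha q_{j^{\star}}^{1-\alpha} \;+\; \sum_{j\ne j^{\star}} p_j^\alpha q_j^{1-\alpha},
\end{equation*}
and bound the two pieces separately. For the plurality term, the elementary inequality $(1-Cq)^{1-\alpha}\le e^{2(\alpha-1)Cq}$ (valid whenever $Cq \le 1/2$) combined with $p_{j^{\star}}\le 1$ gives $p_{j^{\star}}^\alpha q_{j^{\star}}^{1-\alpha}\le e^{2(\alpha-1)Cq}$; after dividing $\log(\cdot)$ by $\alpha-1$ this yields the additive $2Cq$ summand of the statement. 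For the off-plurality sum I would rewrite $p_j^\alpha q_j^{1-\alpha} = p_j\cdot(p_j/q_j)^{\alpha-1}$ and apply Holder's inequality with exponents chosen so that (i) the ``concentration factor'' becomes $(\sum_{j\ne j^{\star}} p_j)^{1/r} \le (Cq)^{1/r}$, and (ii) the remaining moment of $p_j/q_j$ identifies with a higher-order Renyi divergence whose post-processing bound contributes the exponent $\alpha(2\alpha-1)s/(2\sigma^2)$. The constant $C/2$ inside the log enters from controlling the number of off-plurality classes, and the identity $\log(e^A+B) = A + \log(1+Be^{-A})$ isolates the claimed log-term from the additive $2Cq$.

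The main technical obstacle, in my view, is tuning the Holder split so that the product reproduces precisely the coefficient $\alpha(2\alpha-1)$ and the constant $C/2$ inside the log: a plain Cauchy--Schwarz against the Renyi divergence at order $2\alpha-1$ already delivers a bound of the same qualitative shape but with a slightly weaker coefficient, so matching the stated form requires threading the indices carefully. A secondary subtlety is justifying $q_{j^{\star}}\ge 1-Cq$ on $D'$, which reduces to a short perturbation argument showing that the noiseless plurality is stable whenever $\gamma(x)$ exceeds $O(\sqrt{s}/N)$; one also needs $Cq\le 1/2$ for the linearization of the plurality term to remain valid, but outside this regime the worst-case bound $\alpha s/(2\sigma^2)$ already dominates and the claimed refinement is vacuous.
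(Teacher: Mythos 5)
Your proposal is correct and follows essentially the same route as the paper: the paper's proof also combines the margin concentration of Lemma~\ref{lem: majorite_vote} (giving singleton probability $1-Cq$ with $q=e^{-N^2\gamma(x)^2/8\sigma^2}$) with a decomposition of $\E_Q[(dP/dQ)^\alpha]$ into the singleton event and its complement, bounding the latter by Cauchy--Schwarz against the order-$2\alpha$ R\'enyi divergence of the underlying Gaussian mechanism (its Lemma on privacy amplification, adapted from PATE), and then simplifying with $-\log(1-q)\le 2q$. Your variants --- H\"older against the $P$-mass of the complement at order $2\alpha-1$ rather than the $Q$-mass at order $2\alpha$, and the explicit perturbation argument for the plurality probability under $D'$ --- are minor refinements of the same argument (and in fact yield a slightly tighter exponent $(\alpha-1)(2\alpha-1)$, so the stated coefficient follows a fortiori), not a different approach.
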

This bound implies when the margin of voting scores is large, the agents enjoy exponentially stronger RDP guarantees in both agent-level and instance-level. In other words, our proposed methods avoid the explicit dependence on model dimension $d$ (unlike DP-FedAvg) and could benefit from ``easy data'' whenever there are high consensus among votes from local agents.
We highlight that Theorem~\ref{thm: data_dependent} is possible because of MPC-vote which ensures that all parties (local agents, server and attackers) observe \emph{only} the $\argmax$ but \emph{not} the noisy-voting scores themselves.

\noindent\textbf{Communication Cost:} Finally, we find that our methods are \emph{embarrassingly parallel} as each agent work independently without any synchronization. Overall, we reduce the (per-agent) up-stream communication cost from $d \cdot T$ floats (model size times $T$ rounds) to $C \cdot Q$, where $C$ is number of classes and $Q$ is the number of data points.

\vspace{-0.5em}
\section{Experimental Results}\label{appendix: experimental}
\vspace{-0.5em}
In this section, we conduct extensive experiments to illustrate the advantages of \textit{AE-DPFL} and \textit{kNN-DPFL} over the conventional DPFLs.  We apply our \textit{AE-DPFL} for agent-level DP and \textit{kNN-DPFL} for instance-level DP based on their distinctive characteristics in privacy guarantee. We organize the following four sets of experiments.
\vspace{-0.5em}
\begin{enumerate}%
	\itemsep0mm
	\item\textbf{Privacy and accuracy trade-off:} 
	We compare privacy or accuracy by aligning the other factor and vice versa in both agent-level and instance-level DP experiments.
	
	\item\textbf{Data heterogeneity:} We investigate various data heterogeneous cases 
	in the agent-level DP setting.
	\item\textbf{Scalability to high-capacity models:} We investigate the performance gap with respect to different network backbones under instance-level DP setting.
	\item\textbf{Data-dependent privacy analysis:} We explore the improved data-dependent privacy (Theorem~\ref{thm: data_dependent}) in the agent-level DP experiments.
\end{enumerate}
\vspace{-0.5em}
The private guarantee of our methods is based on Theorem~\ref{thm:privacy} except the data-dependent privacy $\epsilon^*$ in Table~\ref{tab:result_svhn}. Five independent rounds of experiments are conducted to report mean accuracy and its standard deviation.

\vspace{-1em}
\subsection{Agent-level DP Evaluation}
\vspace{-0.5em}

To investigate various heterogeneous scenarios, we consider three conditions of distributions: (1) data across agents and the server are drawn from different domains (Digit Datasets); (2) I.I.D distribution across agents and the server (CelebA); (3) non-I.I.D partition of local data (MNIST).

Throughout the experiments, we apply ImageNet~\citep{deng2009imagenet} pre-trained AlexNet~\citep{alexnet} as the network backbone.  The hyperparameters of DP-FedAvg include the sampling probability $q$, the noise parameter $\sigma$, $\#$communication rounds $T$ and the clipping threshold $S$. We do a grid search on all hyperparameters, and observe $(S = 0.08, \sigma = 0.06)$ works best under the AlexNet.  The hyperparameters of \textit{AE-DPFL} include the noise scale $\sigma$ and the number of data to be labeled $Q$. For the sake of comparability, we choose to align the privacy cost between two methods and the accuracy we report based on the best models over the choice of the remaining undetermined hyper-parameters (e.g., $T$ in DP-FedAvg) in the cross-validation. More detailed information is referred to Appendix.

\begin{table}
	\centering\label{tab:agent}
		\begin{tabular}{c@{\hskip .1mm}c@{\hskip .01mm}cccc}
			\toprule
			Datasets & \# Agents& Methods & Accuracy ($\%$) &  $\epsilon$   &$\epsilon^*$\\
			\hline
			\multirow{7}{*}{SVHN, MNIST }     &\multirow{7}{*}{200}    & FedAvg         & $87.6\pm0.1$  &-&-   \\
			&    & FedAvg+DA     & $86.9 \pm0.1$ &  - &-\\
			&    &DP-FedAvg     & $76.3\pm0.3$ &  3.7&- \\
			&    & DP-FedAvg+DA     & $71.2\pm0.4$ &  3.6&-\\
			$\to$ USPS   &    & \textit{AE-DPFL} (Ours)  &$83.8\pm 0.2$ &3.6 &2.7 \\
			&    & \textit{AE-DPFL+DA} (Ours)     &$\bf{92.5}\pm\bf{0.2}$ & \bf{2.8}  & \bf{2.1}\\
			\hline
			\multirow{3}{*}{CelebA}   &\multirow{3}{*}{300}       & FedAvg         & $84.9\pm0.1$    &-&- \\
			&   &DP-FedAvg           & $83.2\pm 0.1$&   4.0&-\\
			&   & \textit{AE-DPFL} (Ours)      & $\bf{85.0}\pm \bf{0.1}$&   4.0&-  \\
			\hline
			\multirow{3}{*}{MNIST}   &\multirow{3}{*}{100}       & FedAvg         & $97.8\pm0.1$    &- &-\\
			&   &DP-FedAvg           & $84.2 \pm 0.2$&   4.3&-\\
			&   & \textit{AE-DPFL} (Ours)      & $\bf{95.1}\pm \bf{0.3}$&   4.3 & 4.3\\
			\hline
		\end{tabular}
	\vspace{-2mm}
	
	\caption{\textbf{Agent-level DP Evaluation.} $\epsilon^*$ denotes the data-dependent DP obtained through Theorem~\ref{thm: data_dependent} and we set $\delta=10^{-3}$ for all datasets.}
	\label{tab:result_svhn}
	\vspace{-5mm}
\end{table}

\textbf{Digit Datasets Evaluation}: 
MNIST, SVHN and USPS are put together termed as Digit datasets \citep{mnist,svhn}. It is a controlled setting to mimic the real situations, where distribution of agent-to-server or agent-to-agent can be different. Based on the size of each dataset, we simulate $140$ agents using SVHN with $3000$ records each and $60$ agents using MNIST with $1000$ records each. We split $3000$ unlabeled records from USPS at server and the rest data is used for testing.

We notice that DP-FedAvg and FedAvg never see the server distribution. To boost those two algorithms, we further apply a standard domain adaptation (DA) technique --- adversarial training~\citep{ganin2016domain} on top, denoted as DP-FedAvg+DA and FedAvg+DA, respectively. As a consequence, their local training involves both local data and unlabeled data from the server. Similarly, we define \textit{AE-DPFL+DA} as the DA extension of \textit{AE-DPFL}, where each teacher (agent) model is trained with the same DA technique as that in DP-FedAvg+DA. We set $Q=500$ as the number of data to be labeled for \textit{AE-DPFL} and \textit{AE-DPFL+DA}, and set noise scale $\sigma=25$ for \textit{AE-DPFL} and $\sigma=30$ for \textit{AE-DPFL+DA}. The noise is set larger for \textit{AE-DPFL+DA} because there is a stronger consensus among agent predictions, allowing larger noise level without sacrificing accuracy.

In Table \ref{tab:result_svhn}, we observe: (1) When the privacy cost $\epsilon$ of DP-FedAvg and \textit{AE-DPFL} is close, our method significantly improves the accuracy from 76.3\% to 83.8\%. (2) The further improved accuracy $92.5\%$ of \textit{AE-DPFL+DA} demonstrates that our framework can orthogonally benefit from DA techniques, where it is highly uncertain yet for the gradient-based methods. (3) Both FedAvg and DP-FedAvg perform better than their DA variants; therefore we will only use DP-FedAvg in the following experiments. This result is well expected, as FL with domain adaptation is more closely related to the multi-source domain adaptation~\citep{peng2019moment}. Combining \textit{FedAvg} with the one-source DA methods implies averaging different trajectories towards the server’s distribution, which may not work in practice. Similar learning bound based observation has been investigated in~\citet{peng2019} and it remains unclear how to privatize the multi-source domain adaptation approach.

\textbf{I.I.D CelebA Dataset Evaluation:} 
CelebA~\citep{celeba} is a 220k face attribute dataset with 40 attributes defined. 300 agents are designed with partitioned training data. We split 600 unlabeled data at server, and the rest 59,400 images are for testing. Detailed settings are referred to the Appendix. Consistent to Digits dataset, our method achieves clear performance gain by $1.8\%$ compared to DP-FedAvg while maintaining the same privacy cost.

\begin{figure*}[t]
	\centering	
	\includegraphics[width=0.37\textwidth]{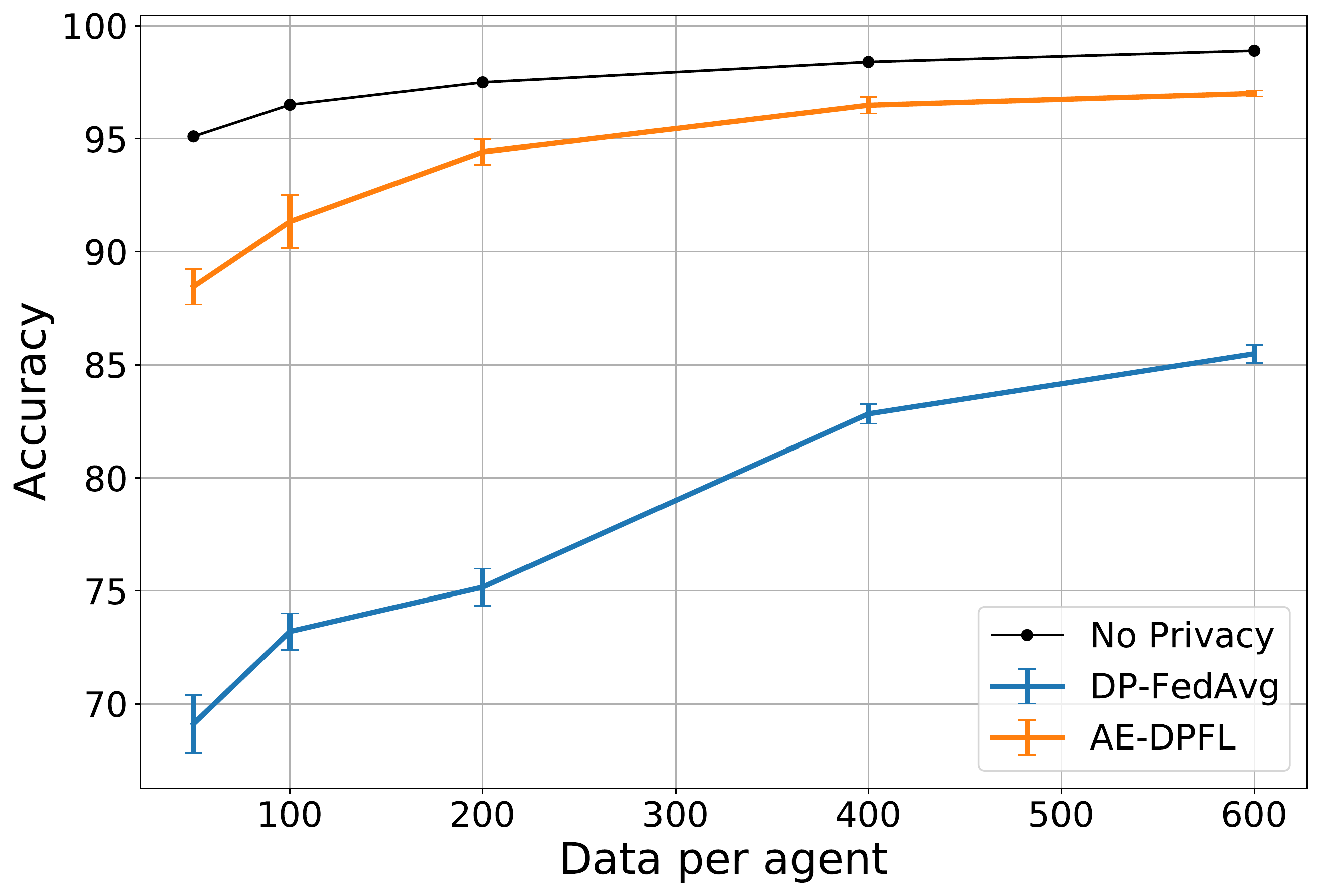}
	\hspace{8mm}
	\includegraphics[width=0.37\textwidth]{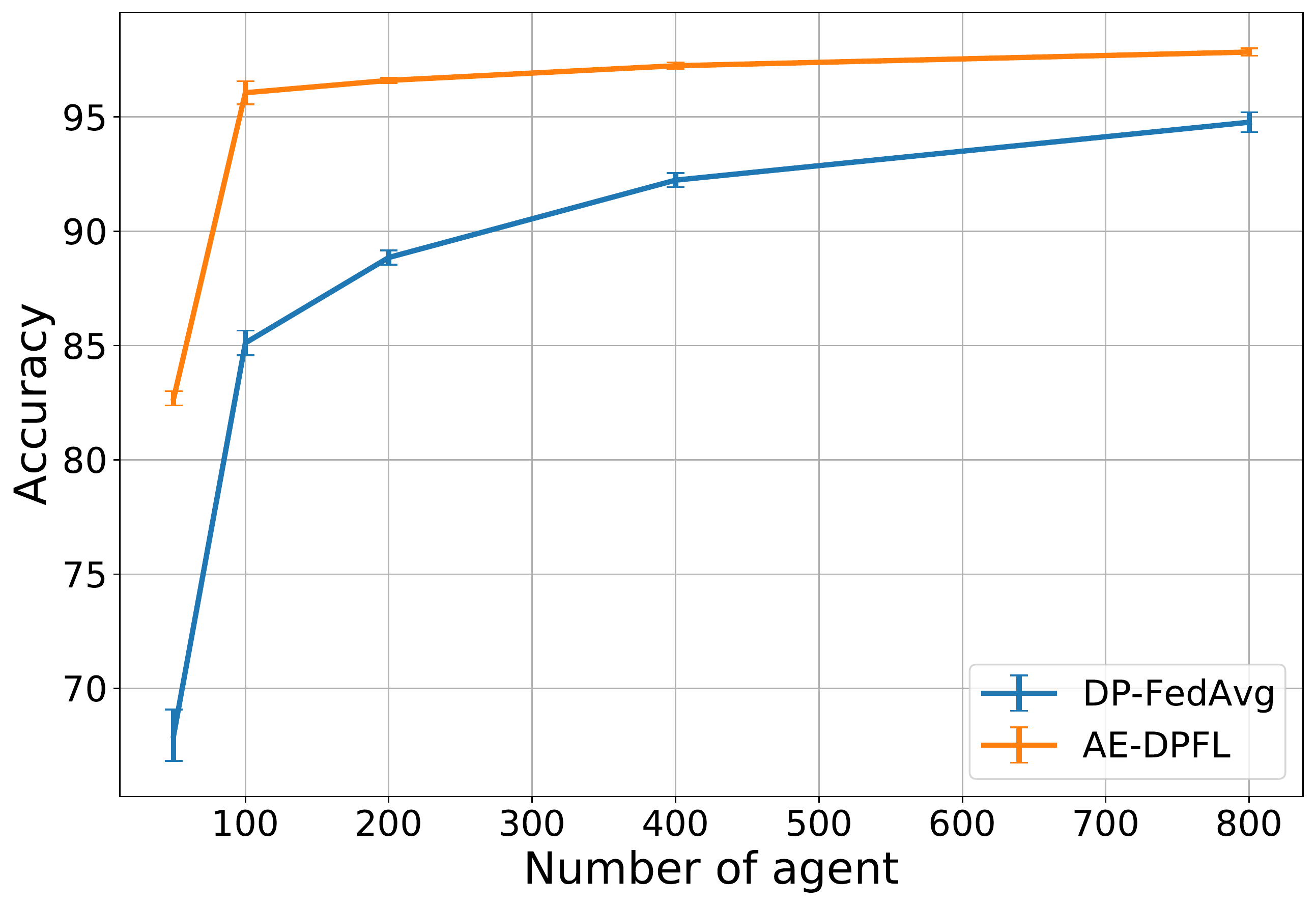}
	\vspace{-3mm}
	\caption{Ablation study. Left: effect on the amount of data per agent. Right: effect on the number of agent. 
	}
	\label{fig:ablation}
	\vspace{-4mm}
\end{figure*}
\textbf{MNIST Dataset with Non-I.I.D Partition:} 
In both CelebA and Digit experiments, we I.I.D partition each dataset into different agents. To investigate our proposed algorithm under a  non-I.I.D partition scenario, we choose a similar experimental setup as \citep{fedavg} did. We divide the training set of the sorted MNIST into $100$ agents, such that each agent will have samples from $6$ digits only. This way, each agent gets $600$ data points from $6$ classes. We split $30\%$ of the testing set in MNIST as the available unlabeled public data and the remaining testing set used for testing. As shown in Table \ref{tab:result_svhn}, our method achieves consistently better performance than DP-FedAvg. 

\begin{table} [t]
	\begin{center}
		\small
		\centerline{
				\tabcolsep=0.001cm
				\begin{tabular}{c@{\hskip .02mm}c@{\hskip .02mm}c@{\hskip .02mm}c@{\hskip .02mm}c@{\hskip .02mm}c@{\hskip .02mm}}
					\toprule
					Network & Methods & \ $A, C,D \to W$ (Acc. $\%$) &  $\epsilon$   & $A, C, W \to D$(Acc.)& $\epsilon$\\
					\hline
					\multirow{5}{*}{AlexNet}        & FedAvg          & $90.5\pm0.1$    & - & $96.8\pm0.1$& - \\
					&DP-FedAvg\     & $28.1\pm0.7$ &  46.6&  $48.2\pm0.8 $& 47.1\\
					&DP-FedSGD\     & $32.6\pm 0.9$ &  4.1& $48.3\pm0.9$&4.0\\
					&DP-FedSGD\     & $75.2\pm0.5$ & 12.4 &  $83.7\pm0.6$ & $7.9$\\
					&\textit{kNN-DPFL} ($\sigma=15$, Ours) &$\bf{75.4}\pm \bf{0.3}$ & \bf{3.9} & $\bf{84.3}\pm\bf{0.3}$& \bf{3.7}\\
					\hline
					\multirow{3}{*}{ResNet50}        & FedAvg         & $96.5\pm0.1$    &- &$ 97.8\pm0.1$& - \\
					&DP-FedSGD    & $25.8\pm0.6$&  4.0 &   $42.7\pm0.5$ & 3.9 \\
					&\textit{kNN-DPFL} ($\sigma=25$, Ours)      & $\bf{86.3}\pm\bf{0.4}$&   \bf{2.8}  & $\bf{91.9}\pm\bf{0.2} $   & \bf{2.0} 
					\\         
					\hline
				\end{tabular}
		}
	\end{center}
	\vspace{-2mm}
	\caption{\textbf{Instance-level DP on Office-Caltech.} 
	}
	\label{tab:result_office}
	\vspace{-2mm}
\end{table}

\begin{table} [t]
	\begin{center}
		\vspace{-1mm}
		\small
		\centerline{
				\begin{tabular}{c@{\hskip .01mm}cccccc}
					\toprule
					& Clipart (Acc.$\%$) &  $\epsilon$   &  Painting (Acc.$\%$)& $\epsilon$ &Real (Acc.$\%$) & $\epsilon$\\
					\hline
					FedAvg & $81.8\pm 0.2$ &  - &  $72.8\pm0.2$ & -&  $82.0\pm0.3$&-\\
					DP-FedSGD   & $44.2\pm0.2$ &  4.4& $ 42.6\pm0.3$ & \bf{4.6} & $39.1\pm0.6$&4.3\\
					DP-FedSGD    & $55.6\pm0.2$ & $11.6$ &  $60.0\pm 0.6 $& 14.6& $55.1\pm0.6$&11.9\\
					\textit{kNN-DPFL} (Ours)       &$\bf{55.8}\pm \bf{0.6}$ &\bf{4.4} & $\bf{61.2} \pm\bf{0.8}  $   & 4.7 & $\bf{55.5}\pm \bf{0.7}$&  \bf{4.2} \\
					\hline
				\end{tabular}
		}
	\end{center}
	\vspace{-4mm}
	\caption{\textbf{Instance-level DP on DomainNet.}  Total number of local agents is 5. We set $\delta=10^{-4}$.}
	\label{tab:cls3}
	\vspace{-5mm}
\end{table}
\textbf{Data-dependent privacy analysis:}
In Table~\ref{tab:result_svhn}, we provide the data-dependent DP ($\epsilon^*$) for Digit and MNIST datasets. CelebA is a multi-label dataset where the improved privacy bound is not applicable. We observe that the privacy guarantee improves significantly (e.g., $\epsilon=3.6$ to $\epsilon^*=2.7$) in Digit dataset, while no amplification in the MNIST dataset. It is because the Digit dataset contains more agents, which allows a larger margin among voting, while in the MNIST dataset, each agent can only observe samples from $6$ digits, which decreases the margin by limiting the effective voter for each unlabeled data. Our results suggest that we can amplify the privacy guarantee whenever there is a high consensus among voters.

\vspace{-1em}
\subsection{Instance-level DP Evaluation}
\vspace{-0.5em}
We investigate the instance-level DP using datasets Office-Caltech10 \citep{caltech} and DomainNet \citep{peng2019moment} from the multi-source domain adaptation literature.



\textbf{Hyperparameters:} The DP-FedSGD method provides the DP baseline where we use mostly the same parameters as \citet{abadi2016deep} but we tune the noise scale based on the fixed privacy budget. The hyperparameters of \textit{kNN-DPFL} consist of the choice of $k$ neighbors, the noise scale $\sigma$, and the number of unlabeled data to be labeled $Q$. We set $k$ to be the $5\%$ of the local data size (i.e., each agent returns the noisy top-$5\%$ neighbors’ predictions). This choice can be further improved by setting $k$ individually, e.g., setting $k$ to be smaller if the local agent observes the domain gap is large. 
In each experiment, we split $70\%$ data from the server domain as the public available unlabeled data, which is also the data to be labeled for \textit{kNN-DPFL}, while the remaining $30\%$ data is used for testing. 


\textbf{Office-Caltech Evaluation:} 
Office-Caltech consists of data from four domains: Caltech (C), Amazon (A), Webcam(W) and DSLR (D). We iteratively pick one domain as the server domain each time and the rest ones are for local agents (e.g., in $A, C,D \to W $, W is treated as the server). For \textit{kNN-DPFL}, we instantiate the public feature extractor using the network backbone without the classifier layer.

To investigate the scalability to high-capacity models, we explore the privacy-utility tradeoffs under two network backbones in Table~\ref{tab:result_office}. Both AlexNet and Resnet50 are Imagenet pre-trained.   
We observe: (1) DP-FedSGD degrades when the backbone changes from the light load AlexNet to the heavy load ResNet50, while ours is improved by $10\%$. It is because larger model capacity leads to more sensitive response to gradient clipping or noise injection, which has been surveyed in~\citet{abadi2016deep}. In contrast, our \textit{kNN-DPFL} avoids the gradient operation by label aggregation and can still benefit from the larger model capacity. Again, our method achieves consistently better utility-privacy trade-off as maintaining same privacy cost and can achieve significantly better utility, or maintaining same utility and can achieve much lower privacy cost. 


\textbf{DomainNet Evaluation:}
DomainNet contains 0.6 million images ranging from six domains: Clipart, Painting, Real, Quickdraw, Infograph, and Sketch. Given that it is a challenging dataset even for the non-private setting \citep{peng2019}, we only report results on cases where the server is iteratively chosen from Clipart, Painting, and Real. In each case, the remaining domain data are assigned to five local agents, respectively. 

Table~\ref{tab:cls3} compares our \textit{kNN-DPFL} method with DP-FedSGD. We observe that when the privacy cost $\epsilon$ is aligned close, our method outperforms DP-FedSGD by more than $10\%$ in accuracy gain across all the three cases. When the accuracy is aligned close, our method saves more than $60\%$ privacy cost, showing consistent advantages over DP-FedSGD.

\vspace{-1em}
\subsection{Ablation Study}
\vspace{-0.5em}
We investigate the agent-level privacy-utility trade-off with respect to $\#$agents and the volume of local data.  The overall privacy budget is fixed to $(\epsilon=5, \delta = 10^{-3})$. MNIST is used for generality and simplicity. We randomly pick $1000$ testing data as the unlabeled server data and the remaining $9000$ data for testing. We adopt the network backbone used in \citep{abadi2016deep} for all methods.

\textbf{Effect of \textit{Data per Agent}:} We fix the number of agents to $100$ and range the number of data per agent from $\{50, 100, 200, 400, 600\}$. By only relaxing the ``data per agent'' factor, we fairly tune the other privacy parameters for all the methods to its maximized performance. In Figure \ref{fig:ablation} (a), as ``data per agent'' increases, all the methods improves as the overall dataset volume increases. Our method achieves consistently higher accuracy over DP-FedAvg. The failure cases for both methods are when ``data per agent'' is below 50, which cannot ensure the well-trained local agent models. Label aggregation over such weak local models results in failure or sub-optimal performance.

\textbf{Effect of \textit{Number of Agents}:}  Following \citep{dp_fl}, we vary the number of agents $N \in \{ 50, 100, 200, 400, 800\}$ and each agent has exactly 600 data, where data samples are randomly duplicated when $N \in \{200, 400, 800\}$. We conduct grid search for each method to obtain optimal hyper-parameters. In Figure \ref{fig:ablation} (b), our method shows clear performance advantage over DP-FedAvg. We also see DP-FedAvg gradually approaches our method as the number of agents increases. 
%

\vspace{-2mm}
\section{Conclusions}

In this work, we propose two voting-based DPFL approaches, \textit{AE-DPFL} and \textit{kNN-DPFL}, under two privacy regimes: agent-level and instance-level DP. We substantially investigate the real-world challenges of DPFL and demonstrate the advantages of our methods over gradient aggregation based DPFL methods on utility, convergence, reliance on network capacity, and communication cost. Extensive empirical evaluation shows that our methods improve the privacy-utility trade-off in both privacy regimes.


\bibliography{example_paper}
\bibliographystyle{iclr2021_conference}
\newpage
\appendix
\onecolumn
In the appendix, we start by introducing other properties of differential privacy in Sec~\ref{section: rdp}. Then the discussions of challenges for gradient-based DPFL is discussed in Sec~\ref{sec:challenge}. In Sec~\ref{sec: data_depend} we provide the proof of our data-dependent privacy analysis, and more experimental details are provided in Sec~\ref{sec: exp}.
\section{Other properties of differential privacy}\label{section: rdp}

\begin{definition}[Renyi Differential Privacy \citep{mironov2017renyi}]
We say a randomized algorithm $\cM$ is $(\alpha, \epsilon(\alpha))$-RDP with order $\alpha \geq 1$ if for neighboring datasets $D, D'$,
$$
\mathbb{D}_{\alpha}(\cM(D)||   \cM(D')):= \frac{1}{\alpha-1}\log \mathbb{E}_{o \sim \cM(D')}\bigg[ \bigg( \frac{\pr[\cM(D)=o]}{\pr[\cM(D')=o]}\bigg)^\alpha \bigg]\leq \epsilon(\alpha).
$$
\end{definition}

RDP inherits and generalizes the information-theoretical properties of DP.
\begin{lemma}[Selected Properties of RDP \citep{mironov2017renyi}]\label{lem: property of RDP}
If $\cM$ obey $\epsilon_{\cM}(\cdot)$-RDP, then 
\begin{enumerate}
    \item[1.] [Indistinguishability] For any measurable set $S\subset \text{Range}(\cM)$, and any neighboring $D,D'$ 
$$e^{-\epsilon(\alpha)}\pr[\cM(D')\in S]^{\frac{\alpha}{\alpha-1}} \leq \pr[\cM(D)\in S] \leq e^{\epsilon(\alpha)}\pr[\cM(D')\in S]^{\frac{\alpha-1}{\alpha}}.$$ 
\item[2.] [Post-processing] For all function $f$, $\epsilon_{f\circ\cM}(\cdot) \leq \epsilon_{\cM}(\cdot).$
\item[3.] [Composition] $\epsilon_{(\cM_1,\cM_2)}(\cdot) = \epsilon_{\cM_1}(\cdot) + \epsilon_{\cM_2}(\cdot).$
\end{enumerate}
\end{lemma}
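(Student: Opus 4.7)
All three items follow from standard manipulations of the Rényi divergence; my strategy is to reduce each property to a single inequality (Hölder's, Jensen's, or the Rényi chain rule), applied to the likelihood ratio $r(o) := \pr[\mathcal{M}(D)=o]/\pr[\mathcal{M}(D')=o]$. Throughout I fix $\alpha>1$ (the cases $\alpha=1,\infty$ are obtained by limits and are standard).

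\textbf{Item 1 (Indistinguishability).} For any measurable $S$, write $\pr[\mathcal{M}(D)\in S] = \mathbb{E}_{o\sim\mathcal{M}(D')}[\mathbf{1}[o\in S]\, r(o)]$ by change of measure. I would then apply Hölder's inequality with conjugate exponents $p=\alpha$ and $q=\alpha/(\alpha-1)$:
\begin{equation*}
\pr[\mathcal{M}(D)\in S] \;\le\; \bigl(\mathbb{E}_{o\sim\mathcal{M}(D')}[r(o)^\alpha]\bigr)^{1/\alpha}\cdot\bigl(\mathbb{E}_{o\sim\mathcal{M}(D')}[\mathbf{1}[o\in S]]\bigr)^{(\alpha-1)/\alpha}.
\end{equation*}
By the RDP hypothesis, the first factor is $\exp\!\bigl(\tfrac{\alpha-1}{\alpha}\epsilon(\alpha)\bigr)\le e^{\epsilon(\alpha)}$, and the second is $\pr[\mathcal{M}(D')\in S]^{(\alpha-1)/\alpha}$, giving the upper bound. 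The lower bound is symmetric: apply the same argument with $D$ and $D'$ swapped to conclude $\pr[\mathcal{M}(D')\in S] \le e^{\epsilon(\alpha)}\pr[\mathcal{M}(D)\in S]^{(\alpha-1)/\alpha}$, then rearrange by raising both sides to the power $\alpha/(\alpha-1)$.

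\textbf{Item 2 (Post-processing).} It suffices to show the data-processing inequality for Rényi divergence, i.e.\ $\mathbb{D}_\alpha(f\circ\mathcal{M}(D)\,\|\,f\circ\mathcal{M}(D'))\le \mathbb{D}_\alpha(\mathcal{M}(D)\,\|\,\mathcal{M}(D'))$ for any (possibly randomized) map $f$. Randomized $f$ reduces to deterministic $f$ by conditioning on the internal randomness, so I may assume $f$ is deterministic. Let $\mu=\mathcal{M}(D)$, $\nu=\mathcal{M}(D')$, and denote the pushforwards $\mu_f,\nu_f$. For each output $t$ in the range of $f$, the ratio $\mu_f(t)/\nu_f(t)$ equals $\mathbb{E}_{o\sim\nu(\cdot\mid f(o)=t)}[r(o)]$. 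Applying Jensen's inequality to the convex function $x\mapsto x^\alpha$ inside the conditional expectation, then taking expectation over $t\sim\nu_f$, yields $\mathbb{E}_{t\sim\nu_f}[(\mu_f(t)/\nu_f(t))^\alpha]\le \mathbb{E}_{o\sim\nu}[r(o)^\alpha]$, from which the bound on $\mathbb{D}_\alpha$ follows after taking $\tfrac{1}{\alpha-1}\log$ of both sides.

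\textbf{Item 3 (Composition).} I handle the general adaptive case where $\mathcal{M}_2$ takes both the dataset and the output of $\mathcal{M}_1$ as input. Let $\mu_1(o_1)=\pr[\mathcal{M}_1(D)=o_1]$, $\mu_2(o_2\mid o_1)=\pr[\mathcal{M}_2(D,o_1)=o_2]$, and similarly $\nu_1,\nu_2$ for $D'$. The joint likelihood ratio factors as $r_{1,2}(o_1,o_2)=r_1(o_1)\cdot r_2(o_2\mid o_1)$. Therefore
\begin{equation*}
\mathbb{E}_{(o_1,o_2)\sim(\nu_1,\nu_2)}\!\left[r_{1,2}(o_1,o_2)^\alpha\right] \;=\; \mathbb{E}_{o_1\sim\nu_1}\!\left[r_1(o_1)^\alpha\cdot \mathbb{E}_{o_2\sim\nu_2(\cdot\mid o_1)}[r_2(o_2\mid o_1)^\alpha]\right].
\end{equation*}
The inner expectation is at most $e^{(\alpha-1)\epsilon_{\mathcal{M}_2}(\alpha)}$ uniformly in $o_1$ by the RDP hypothesis on $\mathcal{M}_2$; pulling it outside and applying the RDP hypothesis on $\mathcal{M}_1$ to the remaining expectation gives the product bound $e^{(\alpha-1)(\epsilon_{\mathcal{M}_1}(\alpha)+\epsilon_{\mathcal{M}_2}(\alpha))}$. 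Taking $\tfrac{1}{\alpha-1}\log$ yields the stated additive composition.

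\textbf{Main obstacle.} None of the three arguments is deep; the one subtlety is in Item 3, where one must be careful to allow $\mathcal{M}_2$ to depend adaptively on the output of $\mathcal{M}_1$ (otherwise the claim in the paper is weaker than needed for iterative DP training). The clean factorization of $r_{1,2}$ into marginal and conditional likelihood ratios, followed by tower-property / iterated expectation, is the key step and handles adaptivity transparently.
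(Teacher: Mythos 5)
The paper does not prove this lemma at all: it is imported verbatim from \citet{mironov2017renyi} as a cited black box, so there is no in-paper argument to compare against. Your three proofs are the standard ones (change of measure plus H\"older for probability preservation, Jensen/data-processing for post-processing, and the conditional factorization of the likelihood ratio for adaptive composition), and all three are essentially correct; your handling of adaptivity in Item 3 is exactly the right level of care, since the paper's composition claim is used over $Q$ adaptively chosen queries. One small constant issue in Item 1: to get the stated lower bound $e^{-\epsilon(\alpha)}\pr[\cM(D')\in S]^{\alpha/(\alpha-1)}$ you must rearrange \emph{before} loosening the H\"older factor. From the swapped inequality in its tight form, $\pr[\cM(D')\in S]\le e^{\frac{\alpha-1}{\alpha}\epsilon(\alpha)}\pr[\cM(D)\in S]^{(\alpha-1)/\alpha}$, raising to the power $\alpha/(\alpha-1)$ gives exactly $\pr[\cM(D)\in S]\ge e^{-\epsilon(\alpha)}\pr[\cM(D')\in S]^{\alpha/(\alpha-1)}$; if instead you first weaken the factor to $e^{\epsilon(\alpha)}$ as written, the same rearrangement only yields the weaker constant $e^{-\frac{\alpha}{\alpha-1}\epsilon(\alpha)}$. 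Since you already computed the tight factor $e^{\frac{\alpha-1}{\alpha}\epsilon(\alpha)}$ for the upper bound, this is a one-line fix rather than a gap. (As a cosmetic point, the "equality" in Item 3 should be read as an upper bound on the RDP of the composed mechanism, which is the direction you prove and the only direction the paper uses.)
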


This composition rule often allows for tighter calculations of $(\epsilon,\delta)$-DP for the composed mechanism than the strong composition theorem in~\citep{kairouz2015composition}.
Moreover, we can covert RDP to $(\epsilon,\delta)$-DP for any $\delta >0$ using:
\begin{lemma}[From RDP to DP] If a randomized algorithm $\cM$ satisfies $(\alpha,\epsilon(\alpha))$-RDP, then $\cM$ also satisfies $(\epsilon(\alpha)+\frac{\log(1/\delta)}{\alpha-1},\delta)$-DP for any $\delta \in (0,1)$. \label{lem: rdp2dp}
\end{lemma}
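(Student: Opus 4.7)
The plan is to execute the standard ``split by privacy loss'' conversion, using Markov's inequality on the appropriate moment of the density ratio furnished by RDP. Fix neighboring datasets $D, D'$ and let $p$, $q$ denote the densities (or pmfs) of $\cM(D)$, $\cM(D')$ respectively. Set $\epsilon^\star := \epsilon(\alpha) + \frac{\log(1/\delta)}{\alpha-1}$; the goal is to show $\Pr[\cM(D) \in S] \leq e^{\epsilon^\star}\Pr[\cM(D') \in S] + \delta$ for every measurable $S$.

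First, I would introduce the pointwise privacy loss $L(o) := \log\frac{p(o)}{q(o)}$ and the ``bad'' set $B := \{o : L(o) > \epsilon^\star\}$. For any measurable $S$, split
$$
\Pr[\cM(D)\in S] = \Pr[\cM(D)\in S\cap B^c] + \Pr[\cM(D)\in S\cap B].
$$
On $S\cap B^c$ the density ratio is pointwise bounded by $e^{\epsilon^\star}$, so $\Pr[\cM(D)\in S\cap B^c]\leq e^{\epsilon^\star}\Pr[\cM(D')\in S\cap B^c]\leq e^{\epsilon^\star}\Pr[\cM(D')\in S]$. Thus it only remains to show $\Pr[\cM(D)\in B]\leq \delta$, after which combining the two bounds yields exactly $(\epsilon^\star,\delta)$-DP.

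Second, I would extract the moment bound from RDP. By definition of $(\alpha,\epsilon(\alpha))$-RDP,
$$
\mathbb{E}_{o\sim q}\left[\left(\tfrac{p(o)}{q(o)}\right)^{\alpha}\right] \leq e^{(\alpha-1)\epsilon(\alpha)}, \qquad \text{equivalently}\qquad \mathbb{E}_{o\sim p}\left[\left(\tfrac{p(o)}{q(o)}\right)^{\alpha-1}\right]\leq e^{(\alpha-1)\epsilon(\alpha)}.
$$
Applying Markov's inequality to the nonnegative random variable $(p(o)/q(o))^{\alpha-1}$ under $o\sim p$,
$$
\Pr_{o\sim p}\!\big[L(o)>\epsilon^\star\big] = \Pr_{o\sim p}\!\left[\left(\tfrac{p(o)}{q(o)}\right)^{\alpha-1} > e^{(\alpha-1)\epsilon^\star}\right] \leq \frac{e^{(\alpha-1)\epsilon(\alpha)}}{e^{(\alpha-1)\epsilon^\star}} = e^{-\log(1/\delta)} = \delta,
$$
where the last equality uses the definition of $\epsilon^\star$. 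Combining with the first paragraph completes the argument.

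There is no genuinely hard step; the main subtlety is merely bookkeeping: one must use the $p$-expectation of $(p/q)^{\alpha-1}$ (rather than the $q$-expectation of $(p/q)^\alpha$) so that Markov's inequality bounds a probability under $p = \cM(D)$, and one must handle $q(o)=0$ by observing that such outcomes contribute zero probability under $q$ and can be absorbed into $B$ without affecting the above inequalities. A brief remark on measurability / sigma-finiteness covers the fully general case beyond densities.
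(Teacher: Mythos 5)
Your proof is correct. The paper itself states this lemma without proof (it is imported from \citet{mironov2017renyi}, Proposition~3), and your argument is exactly the standard one used there: split $\pr[\cM(D)\in S]$ over the tail event $B=\{o: \log(p(o)/q(o))>\epsilon^\star\}$, and bound $\pr_{o\sim p}[B]$ by Markov's inequality applied to $(p(o)/q(o))^{\alpha-1}$ under $p$, whose expectation equals $\E_{o\sim q}[(p(o)/q(o))^{\alpha}]\leq e^{(\alpha-1)\epsilon(\alpha)}$ by the RDP hypothesis. The only implicit assumptions --- that $\alpha>1$ (so the map $x\mapsto x^{\alpha-1}$ is increasing and the bound is non-vacuous) and that finiteness of the R\'enyi divergence forces $p$ to vanish where $q$ does --- are handled adequately by your closing remark.
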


\section{More Discussions of Challenges for Gradient-Based FL\label{app: convergence}}
\begin{definition}
	A function $\ell$ is Lipschitz continuous with constant $G>0$, if $$|\ell(x) - \ell(y)| \leq G ||x-y||_2$$ for all $x,y$.
\end{definition}

\begin{proposition}
	Let the objective function of agents $f_1,..., f_N$ obeys that $f_i$ is piecewise linear (which implies that the global objective $F = \frac{1}{N}\sum_{i=1}^N f_i$ is piecewise linear) and $G$-Lipschitz. Let $\eta$ be the learning rate taken by individual agents. Then the outer loop FedAvg update is equivalent to $\theta^+ = \theta - E\eta g$ for some $g\in \R^d$, where 
	(a) $g = \nabla F(\theta)$ if $\theta$ is in the $\nu$ interior of the linear region of $f_1,...,f_N$ and  $E < \nu / (\eta G)$; (2) $g$ is a Clarke-subgradient \footnote{Clarke-subgradient is a generalization of the subgradient to non-convex functions. It reduces to the standard (Moreau) subgradient when $F$ is convex.} of  $F$ at $\theta$, if $\theta$ is on the boundary of at least two linear regions and at least $\nu$ away in Euclidean distance from another boundary and $E<\nu / (\eta G)$; (c) otherwise, we have that $\|g - \nabla F(\theta)\|_2 \leq E \eta G$. Moreover, statement (c) is true even if we drop the piecewise linear assumption.
\end{proposition}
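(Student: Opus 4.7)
The plan is to unroll the FedAvg inner loops to obtain an explicit expression for the outer-loop update, identify the effective ``gradient'' $g$ in $\theta^+=\theta-E\eta g$, and then analyze it case by case under the hypotheses on where $\theta$ sits relative to the linear regions. Starting from the SGD recursion $\theta_i^{t+1}=\theta_i^t-\eta\, v_t^{(i)}$ with $v_t^{(i)}\in\partial^c f_i(\theta_i^t)$ and $\theta_i^0=\theta$, telescoping gives $\theta_i^E-\theta=-\eta\sum_{t=0}^{E-1}v_t^{(i)}$, and averaging over agents yields
\[
\theta^+-\theta=-E\eta\, g,\qquad g:=\frac{1}{NE}\sum_{i=1}^N\sum_{t=0}^{E-1}v_t^{(i)}.
\]
Since each $f_i$ is $G$-Lipschitz, $\|v_t^{(i)}\|_2\leq G$ for all $i,t$, which yields the uniform trajectory bound $\|\theta_i^t-\theta\|_2\leq t\eta G\leq E\eta G$; this is the single geometric fact that drives all three cases.

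For case (a), the hypothesis that $\theta$ is in the $\nu$-interior of a common linear region for every $f_i$, together with $E\eta G<\nu$, guarantees that every $\theta_i^t$ remains in that open linear region for all $i$ and $t$. Since $f_i$ is affine there, $v_t^{(i)}=\nabla f_i(\theta)$ is constant along each trajectory, and averaging across agents collapses $g$ to $\frac{1}{N}\sum_i\nabla f_i(\theta)=\nabla F(\theta)$.

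For case (b), I would invoke Clarke subdifferential calculus. The assumption (boundary of at least two regions, distance $\geq\nu$ from any other boundary) combined with $E\eta G<\nu$ means each trajectory only visits linear regions of $f_i$ incident to $\theta$. The gradient on any such adjacent open region is by definition a vertex of the polytope $\partial^c f_i(\theta)$, so $v_t^{(i)}\in\partial^c f_i(\theta)$; convex combinations preserve membership, hence $\frac{1}{E}\sum_t v_t^{(i)}\in\partial^c f_i(\theta)$. Applying the Clarke sum rule, which holds with equality here because piecewise linear functions are Clarke regular, yields $g\in\partial^c F(\theta)$.

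The hardest part, and the main obstacle, will be case (c), where the piecewise linear assumption is dropped and only $G$-Lipschitzness is available. The target bound $\|g-\nabla F(\theta)\|_2\leq E\eta G$ comes from the decomposition
\[
g-\nabla F(\theta)=\frac{1}{NE}\sum_{i,t}\bigl(v_t^{(i)}-\nabla f_i(\theta)\bigr),
\]
controlled by the displacement $\|\theta_i^t-\theta\|_2\leq E\eta G$. The delicate point is that raw Lipschitzness does not control differences of (sub)gradients, so the bound must be interpreted through a Goldstein-style $(E\eta G)$-subdifferential: $g$ lies in $\mathrm{conv}\bigcup_{\|y-\theta\|\leq E\eta G}\partial^c F(y)$, and any such element is within an effective neighborhood of $\nabla F(\theta)$ governed by $E\eta G$. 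This is where I expect the bookkeeping of Clarke/Goldstein subdifferentials to demand the greatest care, since one must match the exact form of the claimed bound without invoking any smoothness beyond Lipschitz continuity.
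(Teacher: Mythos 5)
Your overall strategy---unrolling the inner loop to write $g=\frac{1}{NE}\sum_{i,t}v_t^{(i)}$, using $G$-Lipschitzness to get the trajectory bound $\|\theta_i^t-\theta\|_2\le E\eta G$, and then case-analyzing by where the trajectories can wander---is exactly the paper's argument, and your treatment of (a) coincides with it. For (b) the paper likewise writes the update as a convex combination of gradients of the linear regions incident to $\theta$ and concludes membership in $\partial^c F(\theta)$ by averaging. One caveat on your justification there: the claim that ``piecewise linear functions are Clarke regular'' is false in general ($-|x|$ is piecewise linear and not regular at $0$), so the equality form of the sum rule is not available; what is actually being used, both by you and by the paper, is the unstated inclusion $\frac{1}{N}\sum_i\partial^c f_i(\theta)\subseteq\partial^c F(\theta)$, which the paper asserts implicitly by treating the agent average as ``also a convex combination.'' You inherit the paper's leap here but dress it in an incorrect regularity claim.

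On (c), you are right that Lipschitz continuity of $f$ does not control differences of (sub)gradients, and your Goldstein-subdifferential reading does not literally produce $\|g-\nabla F(\theta)\|_2\le E\eta G$ either---but this is not a defect of your attempt relative to the paper's. The paper's entire proof of (c) is the remark that ``$E$ steps can at most get you away for $\eta EG$,'' i.e., it establishes only the displacement bound and leaves the passage to the stated gradient-difference inequality implicit. So the step you honestly flag as the obstacle is precisely the step the paper does not supply; in that sense your proposal reproduces the paper's proof, with a more explicit accounting of where it is incomplete.
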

\begin{proof}
	For the Statement (a), observe that for all $\theta'$ such that $\|\theta' - \theta\| \leq \nu$ neighborhood, we have that $\nabla f_i(\theta') = \nabla f_i(\theta)$.  When $E< \nu / (\eta G)$, the cumulative gradients of agent $i$ is equal to $E \nabla f_i(\theta)$.   For Statement (b), notice that the Clarke subdifferential at $\theta$ is the convex hull of the one-sided gradient, thus as we move along the negative gradient direction in the inner loop, we enter and remains in the linear region. Thus the update direction is 
	$$\frac{1}{N} \left(\sum_{i \text{ s.t. } f_i \text{ is differentiable at }\theta} E \eta \nabla f_i(\theta) +  \sum_{i \text{ s.t. } f_i \text{ is not differentiable at }\theta} \eta g_i + (E-1)\nabla f_i(\theta-\eta g_i)\right)$$
	for all $g_i$ such that it is a Clarke-subgradient of $f_i$ it can be written as a convex combination. The proof is complete by observing that the $1/N\sum_i$ is also a convex combination and by multiplying and dividing by $E$.
	Statement (c) is a straightforward application of the Lipschitz property which says that $E$ steps can at most get you away for $\eta EG$ and clearly piecewise linear assumption is not required.
\end{proof}
This proposition says that in almost all $\theta$, increasing $E$ has the effect of increasing the learning rate of the subgradient ``descent'' method for piecewise linear objective functions; and increasing the learning rate of an approximate gradient method in general for Lipschitz objective functions. It is known that for the family of $G-$Lipschitz function supported on a $B$-bounded domain, any Krylov-space method \footnote{One that outputs a solution in the subspace spanned by a sequence of subgradients.} has a rate of convergence that is lower bounded by $O(BG/\sqrt{T})$ if running for $T$ iterations. A close inspection of the lower bound construction reveals that the worst-case problem is $\min_{\theta\in\R^T} \max_i \theta_i  + \|\theta\|^2$, namely, a regularized piecewise linear function.  This is saying that the variant of FedAvg that aggregates only the loss-function part of the gradient or projects only when synchronizing essentially requires $\Omega(1/\alpha^2)$ rounds of outer loop iterations (thus communication) in order to converge to an $\alpha$ stationary point, i.e., increasing $E$ does \emph{not} help, even if no noise is added.

\section{Data-dependent Privacy Analysis}~\label{sec: data_depend}
\subsection{Privacy Analysis}
\begin{theorem}[Restatement of Theorem~\ref{thm:privacy}]
	Let \textit{AE-DPFL} and \textit{kNN-DPFL} answer $Q$ queries with noise scale $\sigma$. For agent-level protection, both algorithms guarantee $(\alpha, Q\alpha/(2\sigma^2))$-RDP for all $\alpha\geq 1$. 
	For instance-level protection, \textit{AE-DPFL} and \textit{kNN-DPFL} obey $ (\alpha, Q\alpha/\sigma^2)$ and
	$ (\alpha, Q\alpha/(k\sigma^2))$-RDP respectively.
\end{theorem}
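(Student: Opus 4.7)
The plan is to recognize both algorithms as $Q$ independent invocations of a Gaussian mechanism followed by an $\arg\max$ post-processing, and then invoke the standard RDP bound for the Gaussian mechanism together with the composition and post-processing properties of RDP in Lemma~\ref{lem: property of RDP}.

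First I would unwind the MPC-Vote step. Because each of the $N$ agents adds independent $\cN(0, (\sigma^2/N)I_C)$ noise to its local vote $f_i(x_t)$, the securely aggregated quantity that the server actually observes is distributed as $v(x_t) + \cN(0, \sigma^2 I_C)$, where $v(x_t) := \sum_{i=1}^N f_i(x_t)$ and the variances $\sigma^2/N$ add up to $\sigma^2$ across the $N$ agents. The released label $\tilde y_t$ is then $\arg\max$ of this noisy vector. Since $\arg\max$ is a data-independent deterministic map, the post-processing property (Lemma~\ref{lem: property of RDP}, item 2) lets me bound the RDP of releasing $\tilde y_t$ by the RDP of releasing the noisy sum itself. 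The Gaussian mechanism applied to a query of $L_2$ sensitivity $\Delta$ with noise $\cN(0,\sigma^2 I_C)$ enjoys $(\alpha, \alpha\Delta^2/(2\sigma^2))$-RDP for every $\alpha\geq 1$, so everything reduces to an $L_2$-sensitivity computation for $v(\cdot)$ under each notion of adjacency, followed by the linear RDP composition rule (Lemma~\ref{lem: property of RDP}, item 3) over the $Q$ queries.

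The sensitivity calculations are short once one identifies how a neighboring dataset perturbs $v(x_t)$. For \textit{AE-DPFL} under agent-level adjacency, deleting an agent removes a single one-hot summand $f_i(x_t)\in\{e_1,\dots,e_C\}$, giving $\Delta = 1$ and hence the $\tfrac{Q\alpha}{2\sigma^2}$ bound. Under instance-level adjacency a single training point can at most flip one agent's hard prediction from one one-hot to another, so $\Delta \leq \|e_a-e_b\|_2 = \sqrt{2}$, which doubles the per-query RDP and yields $\tfrac{Q\alpha}{\sigma^2}$. For \textit{kNN-DPFL} under agent-level adjacency, $f_i(x_t)$ is a convex combination of one-hot labels (a probability vector on the simplex) whose $L_2$ norm is at most $1$, so $\Delta \leq 1$ and we again get $\tfrac{Q\alpha}{2\sigma^2}$. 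For instance-level adjacency the key structural observation is that adding or removing a single local sample can perturb at most one of the $k$ nearest neighbors for any query $x_t$, so $f_i(x_t)$ shifts by at most $(y_{\text{in}}-y_{\text{out}})/k$; the $1/k$ normalization is what delivers the factor-$k$ improvement over \textit{AE-DPFL} and produces the advertised $\tfrac{Q\alpha}{k\sigma^2}$ bound.

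The step I expect to be the main obstacle is the instance-level analysis of \textit{kNN-DPFL}: one has to argue carefully that only a single top-$k$ neighbor of $x_t$ can change when the local dataset is perturbed by a single sample (otherwise the sensitivity would blow up), and then keep track of the normalization constants $1/k$ against the aggregated noise scale $\sigma^2$ correctly. The other three cases are essentially bookkeeping once the correct statistic $v(x_t)$ and the correct adjacency notion are in hand. No further tools beyond the Gaussian-mechanism RDP bound, the post-processing property, and the additive RDP composition rule are needed — the novelty is entirely in identifying the right sensitivities under the agent-level versus instance-level definitions.
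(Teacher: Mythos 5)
Your proof follows essentially the same route as the paper's: unwind the MPC step so that each query is a Gaussian mechanism on the aggregated vote vector (the per-agent variances $\sigma^2/N$ summing to $\sigma^2$), bound the $L_2$ sensitivity under each adjacency notion, apply the Gaussian RDP bound $\alpha\Delta^2/(2\sigma^2)$, and finish with post-processing of the $\argmax$ and linear composition over the $Q$ queries; your sensitivities of $1$, $\sqrt{2}$, and $1$ for the first three cases coincide with the paper's. The only divergence is the \textit{kNN-DPFL} instance-level case: you correctly argue that one added or removed point swaps at most one of the $k$ neighbors, so $\Delta \le \|y_{\mathrm{in}}-y_{\mathrm{out}}\|_2/k = \sqrt{2}/k$, which plugged into the Gaussian bound gives $Q\alpha/(k^2\sigma^2)$ --- a factor-$k^2$ improvement over \textit{AE-DPFL}, not the factor-$k$ you state --- whereas the paper uses the looser sensitivity $\sqrt{2/k}$ to land exactly on $Q\alpha/(k\sigma^2)$. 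This is not a gap (your bound implies the advertised one), but be aware that your own arithmetic proves a strictly stronger statement than the one you claim it produces, and you should reconcile the ``factor-$k$'' remark with the sensitivity you actually derived.
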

\begin{proof}
	In \textit{AE-DPFL}, for query $x$, by the independence of the noise added, the noisy sum is identically distributed to $\sum_{i=1}^{N} f_i(x) + \cN(0, \sigma^2)$.
	Adding or removing one data instance from will change $\sum_{i=1}^{N} f_i(x)$ by  at most $\sqrt{2}$ in L2. The Gaussian mechanism thus satisfies $(\alpha, \alpha s^2/2\sigma^2)$-RDP on the instance-level   for all $\alpha\geq 1$ with an L2-sensitivity $s = \sqrt{2}$.  This is identical to the analysis in the original PATE~\citep{papernot2018}.
	
	For the agent-level, the L2 and L1 sensitivities are both $1$ for adding or removing one agent. 
	
	In \textit{kNN-DPFL}, the noisy sum is identically distributed to $\frac{1}{k}\sum_{i=1}^N\sum_{j=1}^k y_{i,j} +\cN(0, \sigma^2)$. The change of adding or removing one agent will change the sum by at most $1$, which implies the same L2 sensitivity and same agent-level protection as \textit{AE-DPFL}.  The $L2$-sensitivity from adding or removing one instance, on the other hand changes the score by at most $\sqrt{2/k}$ in L2 due to that the instance being replaced by another instance, this leads to an an improved instance-level DP that reduces $\epsilon$ by a factor of $\sqrt{\frac{k}{2}}$.
	
	The overall RDP guarantee follows by the composition over $Q$ queries.  The approximate-DP guarantee follows from the standard  RDP to DP conversion formula $\epsilon(\alpha) + \frac{\log(1/\delta)}{\alpha-1}$ and optimally choosing $\alpha$.
\end{proof}


\subsection{Improved accuracy and privacy with  large margin}

Let $f_1,...,f_N: \cX\rightarrow \triangle^{C-1}$ where $\triangle^{C-1}$ denotes the probability simplex --- the soft-label space.  Note that both algorithms we propose can be viewed as voting of these teachers which outputs a probability distribution in $\triangle^{C-1}$.
First let us define the margin parameter $\gamma(x)$ which measures the difference between the largest and second largest coordinate of $\frac{1}{N}\sum_{i=1}^N f_i(x)$. 

\begin{lemma}
	Conditioning on the teachers, for each public data point $x$,  the noise added to each coordinate is drawn from $\cN(0, \sigma^2/N^2)$, 
	then with probability $\geq 1-C\exp\{-N^2\gamma(x)^2/8\sigma^2\}$, the  privately released label matches the majority vote without adding noise.
\end{lemma}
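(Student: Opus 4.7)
The plan is to argue that the noisy argmax matches the noiseless majority vote whenever the noise on every coordinate is small relative to half the margin, and then control the latter event with a union bound of Gaussian tail probabilities.

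First I would fix a point $x$, write $\bar{f}(x) = \frac{1}{N}\sum_{i=1}^N f_i(x) \in \Delta^{C-1}$, and let $j^\star = \argmax_j \bar f(x)_j$ denote the noiseless winner. By the definition of $\gamma(x)$ as the gap between the largest and second-largest coordinate of $\bar f(x)$, we have $\bar f(x)_{j^\star} - \bar f(x)_j \geq \gamma(x)$ for every $j \neq j^\star$. Write $Z_1,\dots,Z_C \stackrel{\text{iid}}{\sim} \cN(0,\sigma^2/N^2)$ for the injected noise.

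Next I would show the key deterministic implication: if
\[
Z_{j^\star} > -\gamma(x)/2 \quad\text{and}\quad Z_j < \gamma(x)/2 \text{ for all } j\neq j^\star,
\]
then $\bar f(x)_{j^\star} + Z_{j^\star} > \bar f(x)_{j^\star} - \gamma(x)/2 \geq \bar f(x)_j + \gamma(x)/2 > \bar f(x)_j + Z_j$, so the noisy argmax equals $j^\star$. It is worth being a little careful here to use these one-sided conditions rather than $|Z_j| < \gamma(x)/2$, because that is precisely what buys the constant $C$ (as opposed to $2C$) in the final bound.

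Then I would apply the standard Gaussian tail inequality $\Pr[\cN(0,s^2) > t] \leq \exp(-t^2/(2s^2))$ with $s^2 = \sigma^2/N^2$ and $t = \gamma(x)/2$, obtaining $\exp(-N^2\gamma(x)^2/(8\sigma^2))$ for each of the $C$ one-sided bad events ($Z_{j^\star} \leq -\gamma(x)/2$, and $Z_j \geq \gamma(x)/2$ for the $C-1$ other coordinates). A union bound over these $C$ events yields the failure probability $C\exp(-N^2\gamma(x)^2/(8\sigma^2))$, completing the proof. There is no real obstacle here; the only subtlety is making sure to use the one-sided Gaussian bound and to count exactly $C$ bad events rather than $2C$, so that the constant inside the exponent matches the claimed $1/(8\sigma^2)$ rather than $1/(4\sigma^2)$.
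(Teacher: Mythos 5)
Your proof is correct and follows essentially the same route as the paper: one-sided Gaussian tail bounds on the event $\{Z_{j^\star} \le -\gamma(x)/2\}$ and $\{Z_j \ge \gamma(x)/2\}$ for $j \ne j^\star$, combined with a union bound over the $C$ coordinates. Your write-up is in fact slightly more careful than the paper's in spelling out the deterministic implication and in noting why the one-sided events give exactly $C$ (rather than $2C$) bad events.
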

\begin{proof}
	The proof is a straightforward application of Gaussian tail bounds and a union bound over $C$ coordinates. Specifically, $\P[Z_{j^*}< -\gamma(x)/2] \leq e^{-\frac{N^2\gamma(x)^2 }{8\sigma^2}}$ for the argmax $j^*$. For $j\neq j^*$,  $\P[Z_{j} >  \gamma(x)/2] \leq e^{-\frac{N^2\gamma(x)^2 }{8\sigma^2}}$.  By a union bound over all coordinates $C$, we get that there perturbation from the boundedness is smaller than $\gamma(x)/2$,  which implies correct release of the majority votes.
\end{proof}
This lemma implies that for all public data point $x$ such that $\gamma(x) \geq \frac{2\sqrt{2\log(C/\delta)}}{N}$, the output label matches noiseless majority votes with probability exponentially close to $1$.

Next we show that for those data point $x$ such that $\gamma(x)$ is large, the privacy loss for releasing $\argmax_j [\frac{1}{N}\sum_{i=1}^N f_i(x)]_j$ is exponentially smaller. The result is based on the following privacy amplification lemma that is a simplification of Theorem~6 in the appendix of \citep{papernot2018}.
\begin{lemma}\label{lem:amplification}
	 Let $\cM$ satisfy $(2\alpha,\epsilon)$-RDP, and there is a singleton output that happens with probability $1-q$ when $\cM$ is applied to $D$. Then for any $D'$ that is adjacent to $D$, Renyi-divergence
	 $$
	 D_{\alpha}(\cM(D)\|\cM(D')) \leq -\log(1-q) + \frac{1}{\alpha-1}\log(1 +  q^{1/2}(1-q)^{\alpha-1} e^{(\alpha-1) \epsilon}).
	 $$
\end{lemma}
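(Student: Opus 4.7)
The plan is to bound $M(\alpha) := \sum_o P(o)^\alpha Q(o)^{1-\alpha}$ directly, where $P=\cM(D)$ and $Q=\cM(D')$, since $D_\alpha(P\|Q) = \log M(\alpha)/(\alpha-1)$. Exponentiating the target inequality by $\alpha-1$ shows it is equivalent to
$$M(\alpha) \;\leq\; (1-q)^{-(\alpha-1)} + q^{1/2}e^{(\alpha-1)\epsilon},$$
which suggests decomposing $M(\alpha)$ around the singleton $o^*$ (with $P(o^*)=1-q$) and matching the singleton contribution to the first term and the tail $\sum_{o \neq o^*} P(o)^\alpha Q(o)^{1-\alpha}$ to the second.

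For the tail I would apply Cauchy--Schwarz after factoring each summand as $P(o)^{1/2}\cdot P(o)^{\alpha-1/2}Q(o)^{1-\alpha}$. The first factor squared sums over $o\neq o^*$ to exactly $q$, yielding the $q^{1/2}$ prefactor. The second factor squared sums to at most $\sum_o P(o)^{2\alpha-1}Q(o)^{2-2\alpha} = M(2\alpha-1)$. Here the $(2\alpha,\epsilon)$-RDP hypothesis enters crucially: by the standard monotonicity of Renyi divergence in its order, $D_{2\alpha-1}(P\|Q) \leq D_{2\alpha}(P\|Q)\leq\epsilon$, so $M(2\alpha-1)\leq e^{(2\alpha-2)\epsilon}$, and taking square roots gives the tail bound $q^{1/2}e^{(\alpha-1)\epsilon}$ exactly as needed.

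The singleton contribution $(1-q)^\alpha Q(o^*)^{1-\alpha}$ must then be bounded by $(1-q)^{-(\alpha-1)}$, equivalently $Q(o^*)^{\alpha-1}\geq (1-q)^{2\alpha-1}$. I expect this to be the main obstacle. The natural tool is the indistinguishability / probability-preservation property of RDP (part~1 of Lemma~\ref{lem: property of RDP}) applied to the event $\{o^*\}$ at order $2\alpha$, which gives $Q(o^*) \geq \bigl((1-q)e^{-\epsilon}\bigr)^{2\alpha/(2\alpha-1)}$. Matching this lower bound to the required exponent in $(1-q)$ takes careful algebra, and is exactly where the stronger $(2\alpha,\epsilon)$ rather than $(\alpha,\epsilon)$ hypothesis earns its keep, simultaneously supplying Cauchy--Schwarz head-room for the tail and probability-preservation strength for the singleton. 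Closing the inequality cleanly may further require a case split on whether $Q(o^*) \gtrless 1-q$, since when $Q(o^*)\geq 1-q$ the singleton piece is trivially bounded by $1-q\leq(1-q)^{-(\alpha-1)}$, and the opposite regime can be handled using the probability-preservation lower bound above.

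Assembling the two contributions and factoring out $(1-q)^{-(\alpha-1)}$ gives
$$M(\alpha) \;\leq\; (1-q)^{-(\alpha-1)}\Bigl(1 + q^{1/2}(1-q)^{\alpha-1}e^{(\alpha-1)\epsilon}\Bigr),$$
and applying $\tfrac{1}{\alpha-1}\log(\cdot)$ to both sides reproduces the stated bound $-\log(1-q) + \tfrac{1}{\alpha-1}\log\bigl(1 + q^{1/2}(1-q)^{\alpha-1}e^{(\alpha-1)\epsilon}\bigr)$.
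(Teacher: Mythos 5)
Your overall architecture --- split $\E_Q[(dP/dQ)^\alpha]$ into the singleton event $E=\{o^*\}$ and its complement, and control the tail by Cauchy--Schwarz together with the order-$2\alpha$ RDP hypothesis --- is exactly the paper's. Your tail treatment is a genuinely different (and nicer) split: the paper bounds $\E_Q[(dP/dQ)^\alpha\mathbf{1}(E^c)] \leq \sqrt{\E_Q[(dP/dQ)^{2\alpha}]}\sqrt{Q(E^c)}$, which requires the residual mass $q$ to sit under $Q=\cM(D')$ and yields $q^{1/2}e^{(2\alpha-1)\epsilon/2}$, whereas your factorization $P(o)^{1/2}\cdot P(o)^{\alpha-1/2}Q(o)^{1-\alpha}$ uses $P(E^c)=q$ and monotonicity of the Renyi divergence in its order to land exactly on the stated $q^{1/2}e^{(\alpha-1)\epsilon}$. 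That piece is correct, consistent with the lemma's literal hypothesis (mass $1-q$ under $\cM(D)$), and actually matches the stated constant better than the paper's own derivation does.

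The genuine gap is precisely where you flagged it: the singleton term. You need $(1-q)^\alpha Q(o^*)^{1-\alpha}\leq(1-q)^{1-\alpha}$, i.e.\ $Q(o^*)^{\alpha-1}\geq(1-q)^{2\alpha-1}$, and probability preservation does not deliver this. At order $2\alpha$ it gives $Q(o^*)\geq((1-q)e^{-\epsilon})^{2\alpha/(2\alpha-1)}$, hence $Q(o^*)^{\alpha-1}\geq(1-q)^{2\alpha(\alpha-1)/(2\alpha-1)}e^{-2\alpha(\alpha-1)\epsilon/(2\alpha-1)}$; the exponent on $(1-q)$ is harmless, but the leftover factor $e^{-\Theta(\alpha\epsilon)}$ cannot be absorbed, so in the regime $Q(o^*)<1-q$ the singleton contribution can exceed your target by a factor as large as roughly $e^{\alpha\epsilon}$. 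No ``careful algebra'' closes this; your case split only handles the easy half. The paper escapes because its proof silently reads the hypothesis the other way around --- it states that ``$E$ is a singleton with probability larger than $1-q$ under $Q$'' and then bounds $P(o^*)^\alpha\leq 1$ and $Q(o^*)^{1-\alpha}\leq(1-q)^{1-\alpha}$ directly, which makes the singleton term trivial (and is why its tail term must then use $Q(E^c)\leq q$, giving the weaker $e^{(2\alpha-1)\epsilon/2}$). So you have correctly isolated a real inconsistency between the lemma's statement and its proof: either the hypothesis should be about $\cM(D')$, in which case your singleton step becomes trivial but your tail step must be redone the paper's way, or an additional lower bound on $Q(o^*)$ must be assumed. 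As written, your proof cannot be completed.
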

\begin{proof}
	Let $P,Q$ be the distribution of of $\cM(D)$ and $\cM(D')$ respectively and $E$ be the event that the singleton output is selected.
	\begin{align*}
	\E_Q[(dP/dQ)^\alpha]  &=  \E_Q[(dP/dQ)^\alpha | E] \P_Q[E] +   \E_Q[(dP/dQ)^\alpha \mathbf{1}(E^c) \\
	 &\leq  (1-q)(\frac{1}{1-q})^{\alpha}  +  \sqrt{\E_Q[(dP/dQ)^(2\alpha)]} \sqrt{\E_Q[\mathbf{1}(E^c)^2]}\\
	 &\leq   (1-q)^{-(\alpha-1)} + q^{1/2} e^{(2\alpha-1)\epsilon/2} = (1-q)^{-(\alpha-1)} \left(  1 + (1-q)^{\alpha-1} q^{1/2} e^{\frac{2\alpha-1}{2}\epsilon}\right)
	 \end{align*}
	 The first part of the second line uses the fact that event $E$ is a singleton with probability larger than $1-q$ under $Q$ and the probability is always smaller than $1$ under $P$. The second part of the second line follows from  Cauchy-Schwartz inequality. The third line substitute the definition of $(2\alpha,\epsilon)$-RDP. Finally, the stated result follows by the definition of the Renyi divergence.
\end{proof}

\begin{theorem}[Restatement of Theorem~\ref{thm: data_dependent}]
	The mechanism that releases $\argmax_j [\frac{1}{N}\sum_{i=1}^N f_i(x) +  \cN(0, (\sigma^2/N^2)  I_C)]_j$ obeys $(\alpha,\epsilon)$-data-dependent-RDP, where
		$$
	\epsilon \leq 2Ce^{-\frac{N^2\gamma(x)^2}{8\sigma^2}} +  \frac{1}{\alpha-1}\log\left(1 +   e^{\frac{(2\alpha-1)\alpha s}{2\sigma^2} - \frac{N^2\gamma(x)^2}{8\sigma^2} + \log C/2}\right),
	$$
	where $s=1$ for AE-DPFL with the agent-level DP, and $s=2/k$ for KNN-DPFL with the instance-level DP.
\end{theorem}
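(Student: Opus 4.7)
The plan is to combine three ingredients: the post-processing property of RDP, Lemma~\ref{lem: majorite_vote} (which shows the noisy argmax agrees with the noiseless majority vote with high probability), and the singleton-output amplification Lemma~\ref{lem:amplification}. The mechanism of interest is the argmax applied to $\tfrac{1}{N}\sum_i f_i(x) + \cN(0,(\sigma^2/N^2) I_C)$, so it is a post-processing of a Gaussian mechanism, and on data points with a large margin the noisy argmax is almost surely equal to a fixed label -- exactly the situation Lemma~\ref{lem:amplification} is designed for.

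First, I would identify the base RDP guarantee. The L2 sensitivity of the average $\tfrac{1}{N}\sum_i f_i(x)$ with respect to the appropriate adjacency is $1/N$ for AE-DPFL (agent-level) and $\sqrt{2/k}/N$ for kNN-DPFL (instance-level); combined with the per-coordinate noise variance $\sigma^2/N^2$, the Gaussian mechanism obeys $(\alpha', \alpha' s/(2\sigma^2))$-RDP at order $\alpha'$, where $s=1$ in the former case and $s=2/k$ in the latter. By post-processing (Lemma~\ref{lem: property of RDP}), the argmax release inherits this bound. Evaluating at $\alpha' = 2\alpha$ gives the $(2\alpha, \alpha s/\sigma^2)$-RDP hypothesis needed to invoke Lemma~\ref{lem:amplification}.

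Second, I would bound the probability that the mechanism fails to return the noiseless majority vote $j^\star(x) := \argmax_j[\tfrac{1}{N}\sum_i f_i(x)]_j$. Lemma~\ref{lem: majorite_vote} gives this failure probability at most $q := C\exp\{-N^2\gamma(x)^2/(8\sigma^2)\}$, so the singleton $\{j^\star(x)\}$ is output with probability at least $1-q$. Plugging this $q$ and the base parameter $\alpha s/\sigma^2$ into Lemma~\ref{lem:amplification} yields
\[
D_\alpha(\cM(D)\|\cM(D')) \leq -\log(1-q) + \tfrac{1}{\alpha-1}\log\!\left(1 + q^{1/2}(1-q)^{\alpha-1} e^{(2\alpha-1)\alpha s/(2\sigma^2)}\right).
\]
The rest is bookkeeping: use $-\log(1-q)\leq 2q$ in the regime $q\leq 1/2$ to convert the first term into $2C\exp\{-N^2\gamma(x)^2/(8\sigma^2)\}$; bound $(1-q)^{\alpha-1}\leq 1$ inside the log; and absorb $q^{1/2}$ into the exponential to produce the term $\exp\{\log C/2 - N^2\gamma(x)^2/(8\sigma^2)\}$, so that collecting exponents gives exactly the claimed inequality.

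The main subtlety I expect is aligning the constants inside the log. Directly square-rooting the union bound of Lemma~\ref{lem: majorite_vote} gives $q^{1/2}\leq \sqrt{C}\exp\{-N^2\gamma(x)^2/(16\sigma^2)\}$, which is a factor of two off in the exponent. To recover the stated $-N^2\gamma(x)^2/(8\sigma^2)$, I would sharpen the failure-event analysis by bounding the probability that the argmax changes via the pairwise differences $X_j - X_{j^\star}\sim \cN(0,2\sigma^2/N^2)$ together with a union bound over $j\neq j^\star$; this gives failure probability at most $C\exp\{-N^2\gamma(x)^2/(4\sigma^2)\}$, whose square root has the correct exponent. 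Everything else reduces to the standard Gaussian RDP formula and mechanical substitution into Lemma~\ref{lem:amplification}.
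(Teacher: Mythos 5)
Your proposal follows the same route as the paper's own proof: post-processing of the Gaussian mechanism to get the base RDP at order $2\alpha$, Lemma~\ref{lem: majorite_vote} to bound the probability $q$ that the noisy argmax deviates from the singleton majority label, and Lemma~\ref{lem:amplification} plus the simplifications $-\log(1-q)\le 2q$ and $(1-q)^{\alpha-1}\le 1$. Your bookkeeping of the sensitivities ($s=1$ agent-level, $s=2/k$ instance-level, giving $e^{(2\alpha-1)\alpha s/(2\sigma^2)}$ at order $2\alpha$) matches the paper exactly.

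The one place you depart from the paper is also the one place the paper's argument is loose, and you are right about it: substituting $q = C e^{-N^2\gamma(x)^2/(8\sigma^2)}$ from Lemma~\ref{lem: majorite_vote} and taking $q^{1/2}$ yields the exponent $-N^2\gamma(x)^2/(16\sigma^2)$ inside the logarithm, not the $-N^2\gamma(x)^2/(8\sigma^2)$ stated in the theorem; the paper's proof sketch does not address this factor of two. Your fix --- replacing the per-coordinate union bound at threshold $\gamma(x)/2$ by a union bound over the pairwise differences $Z_j - Z_{j^*}\sim \cN(0,2\sigma^2/N^2)$, which gives failure probability at most $C e^{-N^2\gamma(x)^2/(4\sigma^2)}$ --- is valid and recovers the stated exponent after the square root, while simultaneously keeping the first term bounded by $2Ce^{-N^2\gamma(x)^2/(8\sigma^2)}$. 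So your write-up actually proves the theorem as stated, whereas a literal reading of the paper's proof only delivers the weaker $16\sigma^2$ exponent.
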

\begin{proof}
	The proof involves substituting $q= Ce^{-\frac{N^2\gamma(x)^2}{8\sigma^2}} $ from Lemma~\ref{lem: majorite_vote} into  Lemma~\ref{lem:amplification} and use the fact that $\cM$ satisfies the RDP of a Gaussian mechanism from the RDP's post-processing lemma.
	The expression bound is simplified for readability using  $-\log(1-x)< 2x$ for all $x>-0.5$ and that $(1-q)^{\alpha-1}\leq 1$.
\end{proof}
As we can see, when given teachers that are largely in consensus, the (data-dependent) privacy loss exponentially smaller.

 \section{Datasets and Models\label{app:exp}}\label{sec: exp}
  Here we provide full details on the datasets and models used in our experiments.

 \textbf{Hyperparameters.}  For DP-FedAvg, the hyperparameters include agent sampling probability $q$, the noise parameter $\sigma$, the clipping threshold $S$. We do a grid search on all hyperparameters,  and observe $(S = 0.08, \sigma = 0.06)$ works best for the simple CNN  (used in ablation study) and AlexNet.  The choice of $q$ depends on the number of agents and the task complexity.  A smaller $q$ implies a stronger privacy guarantee and a larger variance.  We set $q=0.05$ for Digit dataset and $q=0.04$ for CelebA. The number of local iterations $ E $ is another consideration. We empirically observe $E = 20$ achieves beset trade-offs between privacy and accuracy.  
 For all experiments, the learning rate is $0.015$, and we decay the learning rate through communication rounds, which leads to better performance compared to the original implementation in~\citep{dp_fl}.

 For DP-FedSGD, we train each local model using Noisy SGD~\citep{abadi2016deep}, where the privacy parameters include batch size, the clipping threshold $S$, and the noisy scale $\sigma$.  After a grid search, we use a batch size of $16$ for Caltech dataset and $32$ for DomainNet.  We set the clipping threshold $S$ to $0.08$ and tune the noisy scale based on a fixed privacy budget. To amplify the privacy guarantee of DP-SGD using SMC, we set the number of local iteration $E=1$. 
 
 \textbf{Dataset.} We provide detailed information datasets here.
 For Office-Caltech and DomainNet-fruit, we provide the number of images in each domain. An overview of DomainNet with seven selected fruit classes is depicted in Figure~\ref{fig: domainnet}. 
 

 \textbf{Details of CelebA Datasets Evaluation}
 For DP-FedAvg, we set $(S=0.08, \sigma = 0.06, q= 0.04)$. Note that the global sensitivity depends on the number of attributes, in which we use the same clipping technique in \citep{private_kNN} to restrict each agent's prediction clipped to $\tau$ attributions. We set $\tau =4, \sigma = 50$ for AE-DPFL. We apply AlexNet for all methods in this evaluation.
 
 
 \textbf{DomainNet Evaluation:} 
We set $\sigma=35$ for \textit{kNN-DPFL}. Since \textit{kNN-DPFL} can scale to high-capacity model while DP-FedSGD cannot, we use ResNet50 for \textit{kNN-DPFL} and DP-FedSGD is trained with AlexNet. 
	\begin{table}
		\centering
	\begin{tabular}{cccccccc}\\\toprule  
	 Splits & Clipart & Infograph& Painting & Quickdraw& Real& Sketch &Total \\\midrule
		Total& 938 & 1585 & 2274&3500 &3282  &1312  &12891 \\  
	\end{tabular}
		\caption{DomainNet with seven classes}
\end{table}
\begin{table}
	\centering
	\begin{tabular}{cccccc}\\\toprule  
		Splits & Amazon & Dslr & Webcam &Caltech&Total \\\midrule
		Total& 958 & 157 & 295 & 1123 & 2533 \\  
	\end{tabular}
	\caption{Office-Caltech10}
\end{table}  

 \begin{figure}[t]
\centering	
\includegraphics[width = 120mm]{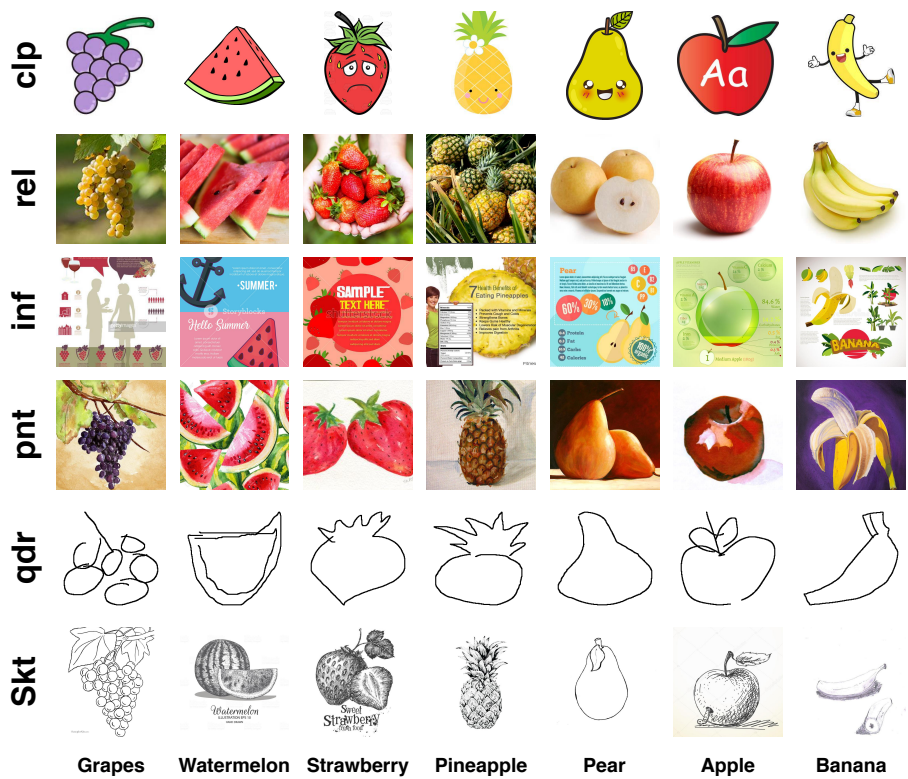}
\caption{An overview of DomainNet dataset with seven selected fruit classes.}
\label{fig: domainnet}
\vspace{-4mm}
\end{figure}

\end{document}